\documentclass[preprint,12pt,number,sort&compress]{elsarticle}

\usepackage{amssymb}
\usepackage{rotating}

\usepackage{amsmath,amsfonts}
\usepackage{amsthm}
\theoremstyle{definition}
\newtheorem{definition}{Definition}[section]

\theoremstyle{plain}
\newtheorem{assumption}{Assumption}[section]
\newtheorem{proposition}{Proposition}[section]

\theoremstyle{remark}
\newtheorem{remark}{Remark}[section]
\usepackage{subcaption}
\usepackage{xcolor}
\usepackage{hyperref}
\usepackage{graphicx}
\usepackage{booktabs}
\usepackage{changepage}
\usepackage{multirow}
\usepackage{array}
\usepackage{tabularx}
\usepackage{comment}
\usepackage{algorithm}
\usepackage{algpseudocode}
\newcommand{\defeq}{\overset{\text{\tiny def}}{=}}
\begin{document}

\begin{frontmatter}

	\title{Post-Hoc Uncertainty-Aware Explanations for Deployed Power Quality Disturbance Classifiers via Laplace Approximation}

	\author{Yinsong Chen}
	\ead{yinsong.chen@deakin.edu.au}
	\affiliation{organization={School of Engineering, Deakin University},%Department and Organization
		city={Melbourne},
		postcode={3216},
		state={VIC},
		country={Australia}}

	\author{Samson S. Yu\corref{1}}
	\ead{samson.yu@deakin.edu.au}

	\affiliation{organization={School of Engineering, Deakin University},%Department and Organization
		city={Melbourne},
		postcode={3216},
		state={VIC},
		country={Australia}}

	\author{Kashem M. Muttaqi}
	\ead{kashem@uow.edu.au}
	\affiliation{organization={ARC Training Centre in Energy Technologies for Future Grids, School of Engineering, University of Wollongong},%Department and Organization
		city={Wollongong},
		postcode={2522},
		state={NSW},
		country={Australia}}

	\cortext[1]{Corresponding author}

\begin{abstract}
Deep learning classifiers achieve high accuracy in power quality disturbance (PQD) recognition, but existing explanation methods return a single deterministic attribution map and provide no measure of its reliability. This paper develops a post-hoc Bayesian explanation (B-explanation) method for trained PQD classifiers. A computationally efficient Laplace approximation converts the trained network into an approximate parameter posterior without retraining, and occlusion sensitivity is propagated through posterior samples to produce a distribution over disturbance-localization maps. Percentile summaries of this distribution yield explanations with distribution-free coverage bands: consensus summaries at low percentiles sharpen localization significantly for distinctive events such as sags, swells, and oscillatory transients, the band width indicates the reliability of each attribution, and the remaining disturbance types show class-dependent behavior. Explanation dispersion also increases under injected measurement noise and synthetic-to-field transfer, complementing predictive uncertainty. Experiments on a synthetic benchmark of 15 disturbance classes and on field-recorded sags compare the method with Monte Carlo dropout and deep ensembles under a common evaluation protocol, evaluate it against deterministic occlusion, LIME, and SHAP with localization and faithfulness metrics, and characterize the computational cost of explanation generation for grid monitoring applications.
\end{abstract}

\begin{keyword}
Power quality disturbances \sep explainable artificial intelligence \sep Bayesian deep learning \sep Laplace approximation \sep uncertainty quantification
\end{keyword}

\end{frontmatter}

\section{Introduction}
\label{sec:intro}

In modern power systems, power quality disturbances (PQDs), such as sags, swells, and harmonics, arise from power-electronic interfaces, distributed renewable generation, and nonlinear loads \cite{khan2023xpqrs,liu2025ensemble}. Disturbance levels exceeding the envelopes of IEC 61000 and IEEE 1159 \cite{quality1995ieee} can cause malfunction of sensitive equipment, protection misoperation, and reduced grid reliability. Automated PQD recognition is therefore a core function of grid-monitoring infrastructure, and its outputs increasingly feed protection, curtailment, and maintenance decisions in which classification errors are costly.

Deep neural networks (DNNs) dominate recent PQD classification research. One-dimensional convolutional neural networks (CNNs) learn discriminative features directly from raw voltage waveforms and exceed 99\% accuracy on synthetic benchmarks \cite{wang2019novel,machlev2021open}. Subsequent studies have improved recognition through dual-attention architectures \cite{liu2023dual}, multidimensional feature-driven ensembles \cite{liu2023multidimensional}, optimized empirical wavelet decompositions \cite{liu2023adaptive}, ensemble frameworks for PV--EV distribution networks \cite{liu2025ensemble}, image-based transfer learning \cite{baig2024ensemble}, multi-modal fusion of waveforms and scalograms \cite{baig2025multimodal}, and cross-domain generalization and denoising techniques \cite{xu2026power,wang2026power,kong2026power,agarwal2026cross}. Despite this progress, the resulting models remain black boxes whose decision evidence cannot be audited, which limits their adoption in safety-critical monitoring.

Explainable artificial intelligence (XAI) addresses this limitation by attributing each decision to input features. In PQD analysis, attribution maps can be verified against physical signal characteristics, such as magnitude depressions for sags or high-frequency content for transients, and occlusion-sensitivity maps have been found to align well with disturbance intervals \cite{machlev2021measuring,machlev2022explainable}. However, existing XAI methods in power systems are deterministic: they condition on a single parameter vector and return a single attribution map. Since DNNs exhibit multi-modal loss surface, many parameter configurations reach comparable accuracy while attending to different signal regions \cite{theus2025generalized}, so a single explanation cannot be distinguished from a model-specific artifact and carries no measure of its own reliability.

Uncertainty-aware explanation has recently been studied outside power systems. Bykov et al.\ propagated Bayesian neural network (BNN) posteriors through layer-wise relevance propagation (LRP) \cite{bykov2020much,bykov2021explaining}. Wood et al.\ derived entropy-based importance measures for uncertainty-aware models \cite{wood2024model}. Mulye and Valdenegro-Toro reported substantial variance of gradient explanations across ensemble members \cite{mulye2025uncertainty}. A method-agnostic general measure-theoretic framework was recently established \cite{chen2026unified}. However, it leaves several post-deployment challenges unresolved: deriving explanation distributions without retraining or ensembles, selecting disturbance-specific summaries, assigning physical meaning to explanation uncertainty, and quantifying computational overhead on monitoring hardware.

This paper addresses these questions through a post-hoc Bayesian explanation (B-explanation) method for trained PQD classifiers. A Laplace approximation converts a regularized network into a Gaussian parameter posterior using one curvature evaluation over the training data, and percentile summaries of the induced explanation distribution provide a calibration rule that matches confidence levels to disturbance families. The main contributions are as follows:
\begin{itemize}
	\item A post-hoc B-explanation method that pushes a Laplace parameter posterior through the occlusion-sensitivity operator, yielding a distribution over disturbance-localization maps for a pre-trained network without retraining, with estimation error bounded into posterior-approximation bias and Monte Carlo error \cite{chen2026unified}.
	\item An exploratory per-class percentile analysis over the 15 disturbance classes, summarized in a guidance table: consensus summaries for distinctive events, broader posterior support for swell, and reliability information for the remaining classes.
	\item Evidence that explanation dispersion increases under injected noise, on shifted inputs, and for composite disturbances, providing descriptive reliability information.
	\item A practicality analysis covering analytic and measured explanation-generation cost, comparisons with MC-dropout, deep ensembles, LIME, and SHAP under a common evaluation protocol with stated sampling budgets, and integration with supervisory control and data acquisition (SCADA) and edge-computing architectures.
\end{itemize}

Table~\ref{tb:related} positions this work against the most closely related studies; the limitation column states, for each study, the gap addressed here.

\begin{table*}[!t]
	\begin{adjustwidth}{-2.5cm}{-2.5cm}
	\scriptsize
	\centering
	\setlength{\tabcolsep}{4pt}
	\renewcommand{\arraystretch}{1.15}
	\caption{Closely related studies, their contributions, and the limitations addressed in this work.}
	\label{tb:related}
	\begin{tabularx}{\linewidth}{p{3cm}p{7cm}p{7cm}}
		\toprule
		Study & Contribution & Limitation addressed in this work \\
		\midrule
		Wang and Chen \cite{wang2019novel} & Deep CNN on raw waveforms; $>$99\% accuracy on 15-class benchmark & No interpretability or uncertainty support \\
		Machlev et al. \cite{machlev2021measuring,machlev2022explainable} & XAI evaluation methodology for PQD classifiers; occlusion found most physically consistent & Deterministic explanations without a reliability measure \\
		Liu et al. \cite{liu2023dual,liu2023multidimensional,liu2023adaptive,liu2025ensemble} & Attention, ensemble, and empirical-wavelet-transform-based recognition of complex and multiple PQDs & Accuracy-oriented black boxes; no explanation or uncertainty quantification \\
		Baig et al. \cite{baig2024ensemble,baig2025multimodal} & Image-transfer and multi-modal deep ensembles for PQD classification & Ensembles used for accuracy only; explanation reliability not considered \\
		Bykov et al. \cite{bykov2020much,bykov2021explaining} & B-LRP: posterior percentiles of LRP heatmaps for BNNs & Tied to LRP and Bayesian training; no ground-truth localization; image domain \\
		Mulye and Valdenegro-Toro \cite{mulye2025uncertainty} & Variance of gradient explanations across uncertainty methods & No calibrated interpretation rule; no domain ground truth \\
		Wood et al. \cite{wood2024model} & Entropy-based importance for uncertainty-aware models & Global importance rather than instance-level localization \\
		Chen et al. \cite{chen2026unified} & General measure-theoretic framework for explanation distributions & Method-agnostic theory; post-hoc use on trained classifiers, per-class calibration, out-of-distribution diagnostics, and deployment cost not addressed \\
		\bottomrule
	\end{tabularx}
	\end{adjustwidth}
\end{table*}

The remainder of the paper is organized as follows. Section~\ref{sec:1} presents the B-explanation methodology. Section~\ref{sec:2} defines the evaluation framework, including localization metrics, faithfulness metrics, and the statistical protocol. Section~\ref{sec:3} reports experimental results on synthetic and real-world data. Section~\ref{sec:discussion} discusses the physical interpretation of explanation uncertainty and deployment considerations. Section~\ref{sec:4} concludes the paper.

\section{Preliminaries and Methodology}
\label{sec:1}
\begin{figure*}[hbt]
	\centering
	\includegraphics[scale=1,width=1.1\textwidth]{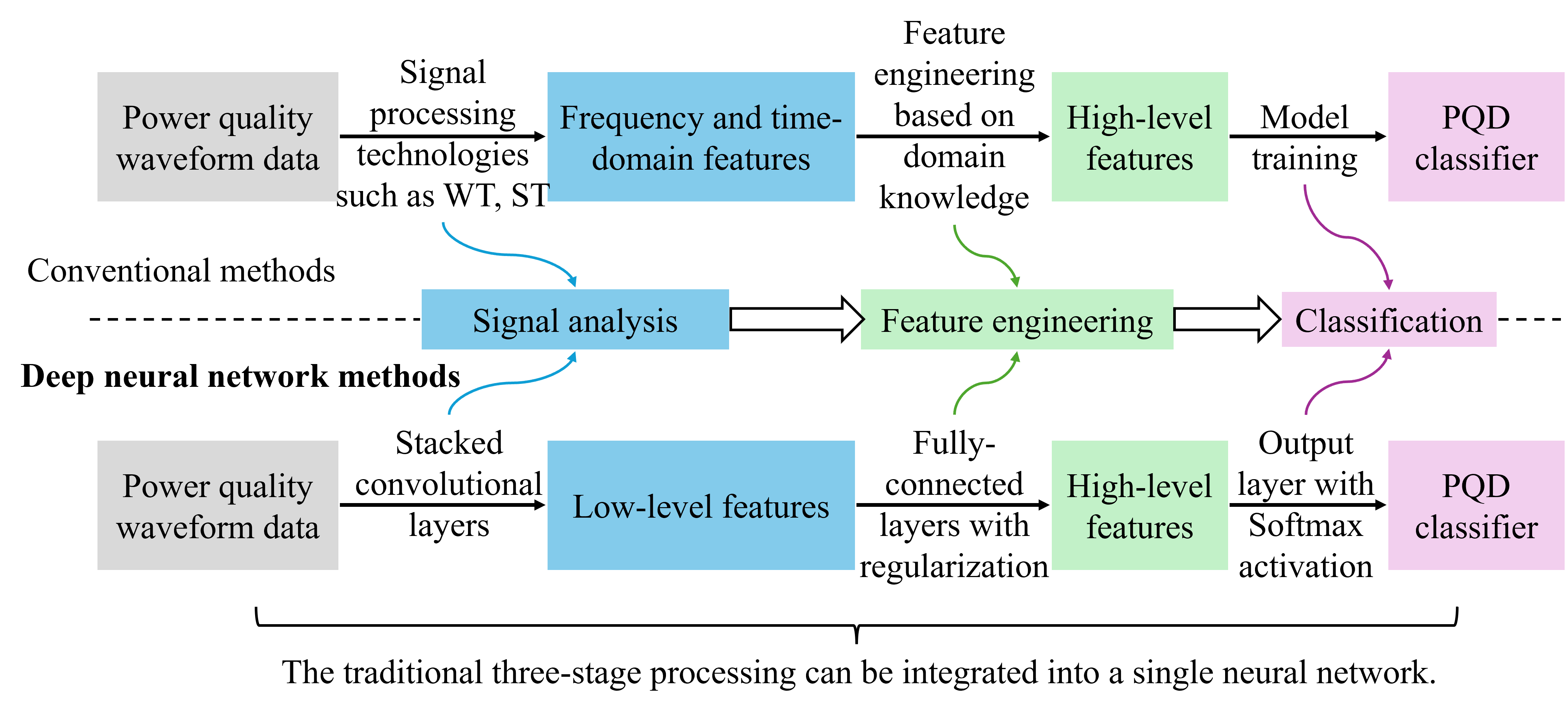}
	\caption{Three-stage framework for PQD classification using conventional and deep learning methods.}
	\label{fig:PQDclassification}
\end{figure*}

\subsection{Power Quality Disturbance Classification}
Let $x[n] \in \mathbb{R}$, $n=1,\dots,N$, denote a sampled voltage waveform exhibiting either a simple or a complex power quality disturbance, and let $x_0[n]$ denote the undisturbed reference,
\begin{equation}
	\label{eq:1}
	x_0[n] = A\sin(\omega n+\phi),
\end{equation}
where $A$, $\omega$, and $\phi$ are the amplitude, angular frequency, and phase. A \textbf{simple disturbance} is a localized deviation from $x_0$ caused by a single event (e.g., sag, transient). A \textbf{complex disturbance} is the superposition or concatenation of $m>1$ simple disturbances. This study considers the 15 representative disturbance types of \cite[Table 1]{wang2019novel}, covering both cases. The classifier is a parametric map
\begin{equation}
	f_{\theta}: \mathcal{X} \rightarrow \Delta^{K-1},
\end{equation}
where $\mathcal{X}\subseteq\mathbb{R}^N$ is the input space, $\Delta^{K-1}$ is the probability simplex over the $K=16$ classes (the 15 disturbance types plus the undisturbed class), $\theta\in\Theta$ collects the network parameters, and the predicted label is $\hat{y}=\arg\max_k [f_{\theta}(x)]_k$.

\begin{table}
	\scriptsize
	\centering
	\caption{Architecture of the deep convolutional neural network (DCNN) of \cite{wang2019novel} used as the deployed classifier.}
	\label{tb:DCNN}
	\begin{tabular}{l|l|l}
		\hline
		Layer & Details & Activation\\
		\hline
		convolution\_1 & kernel: 1 $\times$ 3, channel: 32, stride: 1 & ReLU \\
		\hline
		convolution\_2 & kernel: 1 $\times$ 3, channel: 32, stride: 1 & ReLU \\
		\hline
		max\_pooling\_1 & kernel: 1 $\times$ 3, stride: 1 & \\
		\hline
		batch\_normalization\_1 & num\_features: 32 & \\
		\hline
		convolution\_3 & kernel: 1 $\times$ 3, channel: 64, stride: 1 & ReLU \\
		\hline
		convolution\_4 & kernel: 1 $\times$ 3, channel: 64, stride: 1 & ReLU \\
		\hline
		max\_pooling\_2 & kernel: 1 $\times$ 3, stride: 1 & \\
		\hline
		batch\_normalization\_2 & num\_features: 64 & \\
		\hline
		convolution\_5 & kernel: 1 $\times$ 3, channel: 128, stride: 1 &  ReLU \\
		\hline
		convolution\_6 & kernel: 1 $\times$ 3, channel: 128, stride: 1 &  ReLU \\
		\hline
		max\_pooling\_3 & kernel: 1 $\times$ 624 & \\
		\hline
		batch\_normalization\_3 & num\_features: 128 & \\
		\hline
		flatten\_1 & & \\
		\hline
		fully\_connected\_1 & num\_features: 256 &  ReLU\\
		\hline
		fully\_connected\_2 & num\_features: 128 &  ReLU \\
		\hline
		batch\_normalization\_4 & num\_features: 128 & \\
		\hline
		fully\_connected\_3 & num\_features: 16 &  Softmax \\
		\hline
	\end{tabular}
\end{table}

A typical PQD classification process comprises signal analysis, feature engineering, and classification. Deep learning integrates the three stages within a single network trained end-to-end (Fig.~\ref{fig:PQDclassification}). Table~\ref{tb:DCNN} shows the deep convolutional neural network (DCNN) of \cite{wang2019novel} adopted throughout this paper as the deployed classifier, i.e., the model assumed to exist before the explanation method is applied.

\subsection{Occlusion Sensitivity and Its Resolution}
\label{sec: occ}
XAI methods divide into \textit{global} methods, which characterize model behavior over datasets, and \textit{local} methods, which attribute individual predictions to input features \cite{bykov2021explaining}. Safety-critical PQD monitoring requires instance-level auditing, so this work concerns local attribution.

\begin{definition}[Relevance Attribution Operator]
	\label{def:1}
	For a fixed input $x$, a relevance attribution operator maps the classifier under parameter realization $\theta$ to a relevance vector,
	\begin{equation}
		h_x(\theta) \defeq \mathcal{T}_{x, \theta}[f_{\theta}](x) = R_{\theta}(x) \in \mathbb{R}^N,
	\end{equation}
	where $R_{\theta}(x)[n]$ scores the contribution of $x[n]$ to the prediction of $f_{\theta}$.
\end{definition}

Among local methods, occlusion sensitivity \cite{zeiler2014visualizing} is adopted for three reasons. First, it is structurally matched to PQD signals: it slides a window across the waveform and measures the classifier response to the removal of contiguous segments, which mirrors disturbance localization. Second, prior evaluation on PQD classifiers found occlusion maps the most physically consistent among standard attribution methods \cite{machlev2021measuring}. Third, its relevance scores are bounded (Remark~\ref{rem:integrability}), which makes every estimator used below provably convergent without additional assumptions.

Let $w$ be the occlusion-window length, $v$ the stride, and $b$ the baseline value written into the occluded segment. For window position $t$, the occluded signal $\tilde{x}^{(t)}$ replaces $x[vt], \ldots, x[vt+w-1]$ by $b$, and the window relevance for the class of interest $c$ is
\begin{equation}
	\label{eq:occ_window}
	R_t = [f_{\theta}(x)]_c - [f_{\theta}(\tilde{x}^{(t)})]_c .
\end{equation}
The per-sample relevance averages all windows covering index $n$:
\begin{equation}
	\label{eq:occ_avg}
	\begin{split}
		& h_x(\theta)[n] = \frac{1}{|T_n|} \sum_{t \in T_n} R_t,\\
		& T_n = \{t \mid n \in \{vt,\dots,vt+w-1\}\}.
	\end{split}
\end{equation}

The window and stride are determined by the signal structure rather than tuned freely. At the sampling rate used here (3.2~kHz, 50~Hz nominal frequency), one fundamental cycle spans exactly 64 samples. The window is set to $w=64$, i.e., one cycle, for two reasons: it is the shortest segment whose removal deletes a full period of any cyclic disturbance signature (harmonics, flicker, notch trains), and shorter windows leave partial-cycle artifacts whose occlusion response mixes disturbance evidence with phase effects. The stride $v=8$ ($1/8$ cycle) sets the temporal granularity of the map to 2.5~ms, finer than the shortest IEEE~1159 event category considered here, while keeping the number of forward passes moderate (Section~\ref{sec:complexity}). Sensitivity of the results to $(w,v)$ is quantified in Section~\ref{sec:sensitivity_exp}.

\begin{figure*}[htbp]
	\centering
	\begin{subfigure}[b]{0.48\textwidth}
		\centering
		\resizebox{\width}{2.7cm}{%
			\includegraphics[scale=1,width=\textwidth]{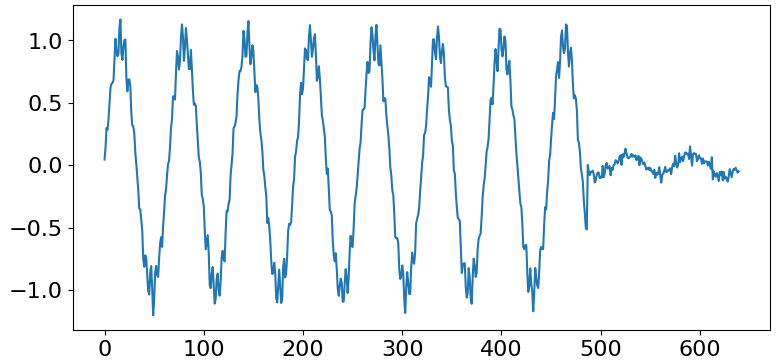}}
		\caption{}
	\end{subfigure}
	\begin{subfigure}[b]{0.48\textwidth}
		\centering
		\resizebox{\width}{2.7cm}{%
			\includegraphics[scale=1,width=\textwidth]{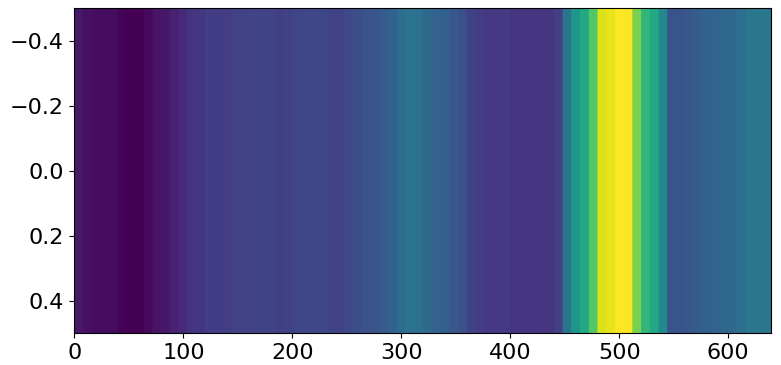}}
		\caption{}
	\end{subfigure}

	\begin{subfigure}[b]{0.48\textwidth}
		\centering
		\resizebox{\width}{2.7cm}{%
			\includegraphics[scale=1,width=\textwidth]{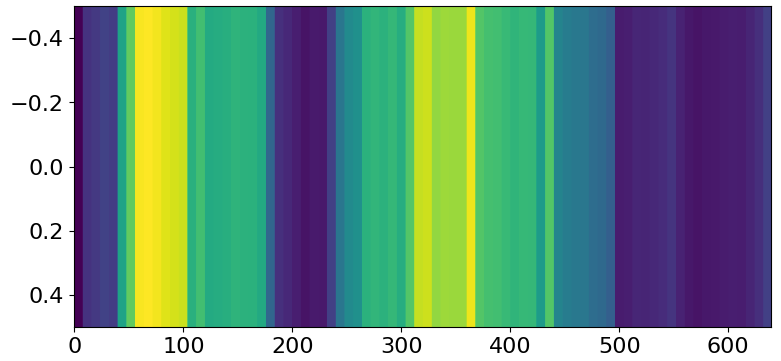}}
		\caption{}
	\end{subfigure}
	\begin{subfigure}[b]{0.48\textwidth}
		\centering
		\resizebox{\width}{2.7cm}{%
			\includegraphics[scale=1,width=\textwidth]{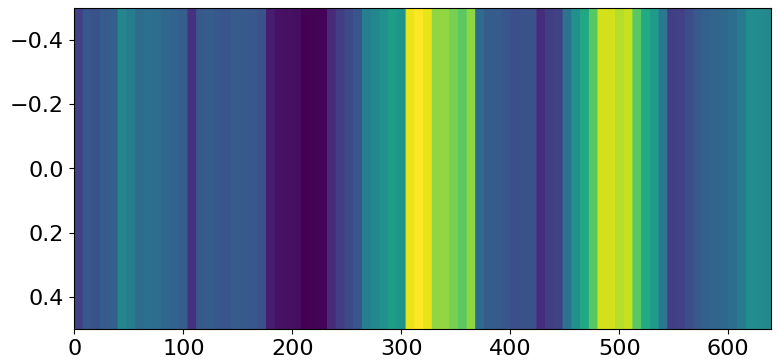}}
		\caption{}
	\end{subfigure}
	\caption{DNNs with identical architectures and similar performance can yield markedly different explanations. Three models share the architecture in Table \ref{tb:DCNN} but differ in parameters sampled from the Laplace posterior. All classify signal (a) correctly as ``Flicker+Sag'' with comparable accuracy and predictive entropy, yet generate distinct explanations (b--d).}
	\label{fig:example}
\end{figure*}

\subsection{The Explanation Distribution}
\label{sec:expl_dist}
The relevance vector $h_x(\theta)$ depends on the parameters $\theta$. Conventional XAI conditions on a single realization, typically the training optimum. However, DNNs are under-determined: many parameter configurations achieve equivalent predictive performance while producing different attributions \cite{theus2025generalized}. Fig.~\ref{fig:example} shows three posterior samples of the DCNN that all classify a ``Flicker+Sag'' signal correctly yet disagree on whether the sag, the flicker, or both drive the decision. A single deterministic explanation corresponds to only one of these attribution patterns.

The Bayesian view resolves the ambiguity by treating $\theta$ as a random variable. Training with cross-entropy loss and $L_2$ regularization of coefficient $\lambda$ corresponds to a categorical likelihood and an isotropic Gaussian prior,
\begin{equation}
	\label{eq:prior}
	p(\mathcal{D}\mid\theta) = \prod_{i=1}^{M} [f_{\theta}(x_i)]_{y_i}, \qquad
	p(\theta) = \mathcal{N}(0, \lambda^{-1}I),
\end{equation}
over a labeled dataset $\mathcal{D} = \{(x_i, y_i)\}_{i=1}^{M}$ of size $M$, and Bayes' theorem yields the posterior
\begin{equation}
	\label{eq:posterior}
	p(\theta\mid\mathcal{D}) = \frac{p(\mathcal{D}\mid\theta)\,p(\theta)}{p(\mathcal{D})},
	\qquad
	p(\mathcal{D}) = \int_{\Theta} p(\mathcal{D}\mid\theta)\,p(\theta)\,d\theta .
\end{equation}
Exact inference is intractable for DNNs, so an approximate posterior $q(\theta) \approx p(\theta\mid\mathcal{D})$ is used (Section~\ref{sec:laplace}). Throughout, $p(\theta\mid\mathcal{D})$ denotes the exact posterior, $q$ the approximate posterior, and push-forward measures are written with the operator $\#$.

\begin{assumption}[Measurable attribution operator]
	\label{ass:measurable}
	For each fixed $x$, the map $h_x:\Theta\to\mathbb{R}^N$ is Borel measurable.
\end{assumption}
For a finite ReLU network and fixed $x$, the parameter space partitions into finitely many Borel activation-pattern regions on which the occlusion map \eqref{eq:occ_window}--\eqref{eq:occ_avg} is continuous, so Assumption~\ref{ass:measurable} holds for the operator used here \cite{chen2026unified}.

\begin{definition}[Explanation Distribution]
	\label{def:expl_dist}
	The \emph{approximate explanation distribution} at input $x$ is the push-forward of $q$ through $h_x$,
	\begin{equation}
		\label{eq:expl_measure}
		Q_x \defeq (h_x)_{\#}q, \qquad
		Q_x(A) = q\!\left(h_x^{-1}(A)\right), \quad A \in \mathcal{B}(\mathbb{R}^N),
	\end{equation}
	where $\mathcal{B}(\mathbb{R}^N)$ is the Borel $\sigma$-algebra. The \emph{ideal explanation distribution} is $P_x \defeq (h_x)_{\#}\,p(\theta\mid\mathcal{D})$.
\end{definition}

$Q_x$ is a probability measure rather than a density: because $h_x$ maps a high-dimensional $\Theta$ into $\mathbb{R}^N$ through a nonlinear network, the push-forward generally admits no Lebesgue density, and the distribution is handled through its action on measurable sets and functions. The distribution is accessed computationally through sampling.

\begin{proposition}[Estimation of the Explanation Distribution]
	\label{prop:sampling}
	Let Assumption~\ref{ass:measurable} hold and let $g:\mathbb{R}^N\to\mathbb{R}^k$ be Borel measurable with $\mathbb{E}_{Q_x}\|g(R)\| < \infty$.
	\begin{enumerate}
		\item \textbf{Sampling equivalence.} If $\theta^{(s)} \sim q$ i.i.d.\ and $R^{(s)} = h_x(\theta^{(s)})$, then $\{R^{(s)}\}$ are i.i.d.\ samples from $Q_x$, and
		\begin{equation}
			\label{eq:mc_equivalence}
			\frac{1}{S}\sum_{s=1}^{S} g(R^{(s)})
			\xrightarrow{\mathrm{a.s.}}
			\mathbb{E}_{Q_x}[g(R)] \qquad (S\to\infty).
		\end{equation}
		\item \textbf{Approximation bias and finite-sample error.} Suppose additionally that $P_x$ and $Q_x$ are supported on a common set $\mathcal{R}_x$ of diameter $D_R \defeq \sup_{r,r'\in\mathcal{R}_x}\|r-r'\|_2 < \infty$, and let $g$ be $1$-Lipschitz and scalar-valued. Then for any $\delta\in(0,1)$, with probability at least $1-\delta$,
		\begin{equation}
			\label{eq:bias}
			\left|\frac{1}{S}\sum_{s=1}^{S} g(R^{(s)}) - \mathbb{E}_{P_x}[g(R)]\right|
			\le
			W_1(P_x, Q_x) + D_R\sqrt{\frac{\log(2/\delta)}{2S}},
		\end{equation}
		where $W_1$ is the Wasserstein-1 distance on explanation space.
	\end{enumerate}
\end{proposition}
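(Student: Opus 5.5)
For part 1, I will first check that $R^{(s)}=h_x(\theta^{(s)})$ are i.i.d.\ with law $Q_x$. By Assumption~\ref{ass:measurable}, each $R^{(s)}$ is a random vector. For any Borel $A$,
\[
\Pr(R^{(s)}\in A)=q\bigl(h_x^{-1}(A)\bigr)=Q_x(A)
\]
by Definition~\ref{def:expl_dist}. Independence carries over because measurable functions of independent random variables are independent. The composition $g\circ h_x$ is Borel measurable. Also $\mathbb{E}_q\|g(h_x(\theta))\|=\mathbb{E}_{Q_x}\|g(R)\|<\infty$ by the change-of-variables formula for push-forward measures. The strong law of large numbers, applied componentwise to the $k$ coordinates of $g(R^{(s)})$, then gives \eqref{eq:mc_equivalence}.

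For part 2, I will split the error with the triangle inequality:
\[
\Bigl|\tfrac1S\textstyle\sum_s g(R^{(s)})-\mathbb{E}_{P_x}g\Bigr|\le \Bigl|\tfrac1S\textstyle\sum_s g(R^{(s)})-\mathbb{E}_{Q_x}g\Bigr|+\bigl|\mathbb{E}_{Q_x}g-\mathbb{E}_{P_x}g\bigr| .
\]

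The second (bias) term is deterministic. Since $g$ is 1-Lipschitz, the Kantorovich--Rubinstein dual representation
\[
W_1(P_x,Q_x)=\sup_{\mathrm{Lip}(f)\le1}\bigl|\mathbb{E}_{P_x}f-\mathbb{E}_{Q_x}f\bigr|
\]
bounds it by $W_1(P_x,Q_x)$. I need $W_1$ to be finite and the duality to apply, which requires finite first moments. Bounded support on $\mathcal{R}_x$ guarantees this. Integrability of $g$ under both measures also follows, since $|g(r)|\le |g(r_0)|+D_R$ on $\mathcal{R}_x$.

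The first (Monte Carlo) term I handle with Hoeffding's inequality. The samples $R^{(s)}$ lie in $\mathcal{R}_x$ almost surely. For any $r,r'\in\mathcal{R}_x$, the Lipschitz property gives $|g(r)-g(r')|\le\|r-r'\|_2\le D_R$. So the i.i.d.\ scalars $g(R^{(s)})$ take values in an interval $[a,a+D_R]$, with $a=\inf_{\mathcal{R}_x}g$ finite. Hoeffding then gives
\[
\Pr\Bigl(\bigl|\tfrac1S\textstyle\sum_s g(R^{(s)})-\mathbb{E}_{Q_x}g\bigr|\ge t\Bigr)\le 2\exp(-2St^2/D_R^2).
\]
Setting the right side equal to $\delta$ gives $t=D_R\sqrt{\log(2/\delta)/(2S)}$, and combining with the bias bound yields \eqref{eq:bias}.

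The main obstacle is the bookkeeping in part 2 rather than any deep estimate. Two points need care:
\begin{itemize}
\item The range of $g(R^{(s)})$ must be an interval of length at most $D_R$. This uses only the diameter, not a centering, which is why the constant in \eqref{eq:bias} is exactly $D_R$ with no factor of 2.
\item The Kantorovich--Rubinstein duality must be justified under the bounded-support hypothesis. The Euclidean norm in the Lipschitz condition must be the same metric used to define $W_1$ and $D_R$.
\end{itemize}
The boundedness of occlusion relevances, noted in Remark~\ref{rem:integrability}, is what makes the hypothesis $D_R<\infty$ natural here. Each $R_t$ is a difference of probabilities, so every coordinate of $h_x(\theta)$ lies in $[-1,1]$ and $D_R\le 2\sqrt N$.
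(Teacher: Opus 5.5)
Your proposal is correct and follows the paper's proof step for step. In part 1 you use the push-forward identity, preservation of independence under measurable maps, and the strong law under the moment assumption; in part 2 you add and subtract $\mathbb{E}_{Q_x}[g(R)]$, bound the bias by Kantorovich--Rubinstein duality and bound the Monte Carlo term by Hoeffding's inequality with range $D_R$. Your extra bookkeeping fills in details the paper leaves implicit: finite first moments from bounded support, the oscillation of $g$ on $\mathcal{R}_x$ being at most $D_R$, and the requirement that the metrics match.
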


\begin{proof}
	Part 1: measurable transformations preserve i.i.d.\ sampling; the push-forward identity gives $\mathbb{E}_{\theta\sim q}[g(h_x(\theta))] = \mathbb{E}_{Q_x}[g(R)]$; the strong law of large numbers applies by the moment assumption \cite{durrett2019probability}. Part 2: add and subtract $\mathbb{E}_{Q_x}[g(R)]$; the Kantorovich--Rubinstein dual form of $W_1$ bounds the first difference \cite{villani2003topics}, and Hoeffding's inequality \cite{hoeffding1963probability} applied to the bounded range $D_R$ bounds the second. A full treatment, including simultaneous coordinate-wise bounds and quantile consistency conditions, is given in \cite{chen2026unified}.
\end{proof}

The bound \eqref{eq:bias} cleanly separates the two error sources of the method: the first term is the bias introduced by replacing the exact posterior with the Laplace approximation and is independent of $S$; the second is the Monte Carlo error and vanishes at rate $O(S^{-1/2})$. Section~\ref{sec:sensitivity_exp} examines the practical consequence: percentile maps stabilize once $S$ reaches a moderate value, after which residual differences are attributable to the posterior approximation alone.

Setting $g(R)=R$ coordinate-wise in \eqref{eq:mc_equivalence} yields the mean explanation $\bar{R}(x)$; $g(R)=(R[n]-\bar{R}[n])^2$ yields the explanation variance $\hat{\sigma}^2[R(x)][n]$; and $g(R)=\mathbf{1}[R[n]\le r]$ yields the marginal cumulative distribution function at each time index, from which percentile explanations are obtained (Section~\ref{sec:uq}).

\begin{remark}[Integrability for occlusion sensitivity]
	\label{rem:integrability}
	Each window relevance \eqref{eq:occ_window} is a difference of two softmax probabilities, so $|R_t|\le 1$ uniformly in $\theta$, and $\|h_x(\theta)\|_2 \le \sqrt{N}$. All moments are finite, the common-support diameter satisfies $D_R \le 2\sqrt{N}$, and every estimator above converges without further conditions. For unbounded attribution operators (e.g., raw gradients) these properties must be assumed or enforced by clipping; this is a second, theoretical reason for preferring occlusion in this method.
\end{remark}

\begin{remark}[No min--max optimization]
	The method contains no adversarial or min--max objective; the only optimization is the standard training of the deployed classifier, which precedes and is unchanged by the proposed method. Convergence of all quantities reported in this paper is the almost-sure convergence of the Monte Carlo estimators in Proposition~\ref{prop:sampling}, not the convergence of an optimization scheme. The min--max normalization applied to relevance maps before visualization (Section~\ref{sec:3.1}) is a display rescaling only.
\end{remark}

Two features of this construction are noteworthy. First, once $q$ and $S$ are fixed, no further approximation is introduced: all error consists of posterior-approximation bias and Monte Carlo error, as quantified by \eqref{eq:bias}. Second, each posterior draw $\theta^{(s)}$ yields simultaneously a predictive sample $f_{\theta^{(s)}}(x)$ and an explanation sample $R^{(s)}(x)$, coupling predictive and explanation uncertainty through shared samples; the out-of-distribution diagnostic of Section~\ref{sec:ood} is based on this coupling.

\subsection{Post-Hoc Posterior via Laplace Approximation}
\label{sec:laplace}
The remaining ingredient is the approximate posterior $q$. The method is designed to operate on classifiers already in service: it constructs the posterior directly from the trained network, so existing models gain uncertainty-aware explanations without retraining, architectural modification, or additional training runs. This post-hoc capability, obtained at the cost of a single pass over the training data, guides the choice of inference scheme.

Training with regularization performs maximum a posteriori (MAP) estimation,
\begin{equation}
	\theta_{\text{MAP}} = \arg\min_{\theta}\; \mathcal{L}_{\text{reg}}(\mathcal{D}, \theta) = \arg\min_{\theta}\;\bigl(\mathcal{L}(\mathcal{D}, \theta)+r(\theta)\bigr),
\end{equation}
where the empirical loss and regularizer correspond to the negative log-likelihood and negative log-prior of \eqref{eq:prior}, $\mathcal{L}(\mathcal{D}, \theta) = -\log p(\mathcal{D}\mid\theta)$ and $r(\theta) = -\log p(\theta)$ \cite{daxberger2021laplace}. The Laplace approximation (LA) expands $\mathcal{L}_{\text{reg}}$ to second order around $\theta_{\text{MAP}}$ (the first-order term vanishes at the minimizer), giving the Gaussian
\begin{equation}
	\label{eq:laplace}
	q(\theta) = \mathcal{N}\!\left(\theta_{\text{MAP}},\, \Sigma\right),
	\qquad
	\Sigma =
	\Bigl[
	\nabla_{\theta}^{2}\mathcal{L}_{\mathrm{reg}}(\mathcal{D},\theta)\big|_{\theta=\theta_{\mathrm{MAP}}}
	\Bigr]^{-1}.
\end{equation}
In practice the Hessian is replaced by the diagonal Fisher information accumulated in one pass over the training data \cite{ritter2018scalable}, and the posterior precision is
\begin{equation}
	\label{eq:precision}
	\Sigma^{-1} = s\,F_{\mathrm{diag}} + \lambda I,
\end{equation}
where $F_{\mathrm{diag}}$ is the diagonal Fisher, $\lambda$ is the prior precision implied by the $L_2$ coefficient, and $s$ is a scale factor calibrated so that sampled predictive confidence matches validation confidence. Both hyperparameters are reported with the experiments, and their influence on the resulting explanation distributions is quantified in Section~\ref{sec:sensitivity_exp}.

Table~\ref{tb:inference} summarizes the trade-off space among alternative inference schemes. Markov chain Monte Carlo (MCMC) offers asymptotically exact posteriors but is impractical at DCNN scale. Variational inference (VI) \cite{blundell2015weight} optimizes a parametric posterior with good approximation quality within the chosen family, but requires replacing the training procedure: the network must be retrained with a stochastic variational objective, typically doubling the parameter count and altering the achievable accuracy, which violates the post-hoc constraint. Deep ensembles \cite{lakshminarayanan2017simple} capture multi-modal structure but multiply training cost by the ensemble size and require the entire training procedure to be rerun. Monte Carlo dropout (MC-dropout) \cite{gal2016dropout} has negligible cost, but its implied posterior is determined by the dropout configuration of the architecture rather than by the training data curvature. SWAG \cite{maddox2019swag} requires access to the tail of the training trajectory, which is unavailable when only a final model exists. Among these schemes, only the LA operates on an arbitrary final checkpoint using one additional pass over the training data, at the cost of a local single-mode Gaussian approximation whose covariance is curvature-limited. Two aspects mitigate this limitation in the present application. First, the explanation operator is bounded (Remark~\ref{rem:integrability}), so heavy posterior tails truncated by the LA cannot produce unbounded explanation error, and the bias term in \eqref{eq:bias} remains finite. Second, the single-mode restriction is examined empirically: the clustering analysis in Section~\ref{sec:cluster} shows that within-mode explanation variability is substantial, and the comparison with deep ensembles in Section~\ref{sec:bnn_comparison} quantifies the effect of the restriction.

\begin{table}[t]
	\scriptsize
	\centering
	\setlength{\tabcolsep}{4pt}
	\renewcommand{\arraystretch}{1.15}
	\caption{Posterior-inference schemes against the constraints of a deployed classifier. ``Post hoc'' means applicable to an existing model without retraining.}
	\label{tb:inference}
	\begin{tabularx}{\linewidth}{lXXX}
		\toprule
		Scheme & Post hoc & Extra cost & Posterior quality \\
		\midrule
		MCMC & no & prohibitive at DCNN scale & asymptotically exact \\
		VI \cite{blundell2015weight} & no (retraining) & full retraining, $2\times$ parameters & good within family; mode-seeking \\
		Deep ensemble \cite{lakshminarayanan2017simple} & no (multi-training) & one training run per posterior sample & multi-modal, well calibrated \\
		MC-dropout \cite{gal2016dropout} & partially (needs dropout layers) & negligible & tied to dropout architecture \\
		SWAG \cite{maddox2019swag} & no (needs training trajectory) & trajectory storage & local Gaussian, richer covariance \\
		Laplace (diagonal) \cite{ritter2018scalable,daxberger2021laplace} & \textbf{yes} & one pass over $\mathcal{D}$ & local Gaussian, curvature-limited \\
		\bottomrule
	\end{tabularx}
\end{table}

With $q$ specified, the method is fully specified. Attributions computed from the single MAP estimate are called \textbf{MAP explanations}; summaries of the explanation distribution at a specified confidence level are called \textbf{B-explanations}.

\subsection{Percentile Bayesian-Explanations and Their Calibration}
\label{sec:uq}
For each time index $n$, let $F_n$ denote the marginal cumulative distribution function of $R[n]$ under $Q_x$, estimated per Proposition~\ref{prop:sampling}. The $\alpha$-percentile B-explanation is the map
\begin{equation}
	\label{eq:percentile}
	R^{(\alpha)}(x)[n] = \inf\{r : F_n(r) \ge \alpha/100\}, \qquad n=1,\dots,N,
\end{equation}
computed empirically as the $\lceil \alpha S/100 \rceil$-th order statistic of $\{R^{(s)}[n]\}_{s=1}^S$.

The empirical percentile is a consistent estimator of \eqref{eq:percentile} whenever the population percentile is unique, a condition that holds for the continuous Laplace posterior pushed through the piecewise-continuous occlusion map except on a null set of levels \cite{chen2026unified}. Moreover, percentile pairs carry a distribution-free finite-sample guarantee: for exchangeable draws, the band between the $k$-th and $\ell$-th order statistics contains a further independent explanation draw with probability $(\ell-k)/(S+1)$ \cite{chen2026unified,david2003order}. The band $[R^{(5)}, R^{(95)}]$ at $S=100$ therefore covers a new posterior explanation with probability close to $0.90$ by construction. In this sense percentile B-explanations are confidence-calibrated, which motivates the levels used throughout: $\alpha \in \{5, 25, 50, 75, 95\}$ comprises the two boundaries of the $90\%$ band, the interquartile boundaries, and the median, i.e., the standard five-number summary of the marginal explanation distribution rather than an arbitrary grid. Section~\ref{sec:sensitivity_exp} examines the sensitivity of the results to these levels.

The percentile level admits a direct interpretation. A high value of $R^{(\alpha)}(x)[n]$ at a low percentile $\alpha$ indicates that nearly the entire posterior assigns high relevance to sample $n$, i.e., the localization reflects posterior consensus. High values appearing only at high percentiles indicate relevance supported by part of the posterior only. Low percentiles therefore give conservative, high-precision explanations, whereas high percentiles give inclusive, high-recall explanations. Fig.~\ref{fig:b-occlusion} illustrates this progression, and the per-class analysis in Section~\ref{sec:percentile_results} examines the corresponding per-class patterns. An overview of the complete B-explanation method is provided in Fig.~\ref{fig:B-Occ framework}.

\begin{figure*}[htbp]
	\captionsetup[subfigure]{labelformat=empty}
	\centering
	\begin{subfigure}[b]{0.4\textwidth}
		\centering
		\resizebox{\width}{1.6cm}{%
			\includegraphics[scale=1,width=\textwidth]{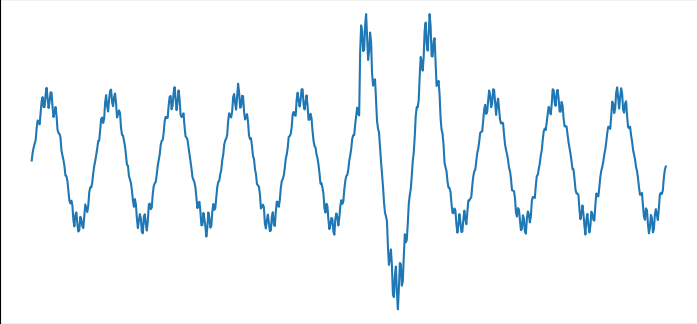}
		}
		\caption{Original Signal}
	\end{subfigure}
	\begin{subfigure}[b]{0.4\textwidth}
		\centering
		\resizebox{\width}{1.6cm}{%
			\includegraphics[scale=1,width=\textwidth]{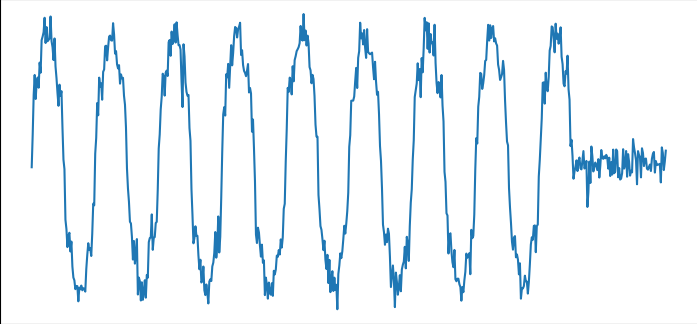}
		}
		\caption{Original Signal}
	\end{subfigure}

	\begin{subfigure}[b]{0.4\textwidth}
		\centering
		\resizebox{\width}{0.9cm}{%
			\includegraphics[scale=1,width=\textwidth]{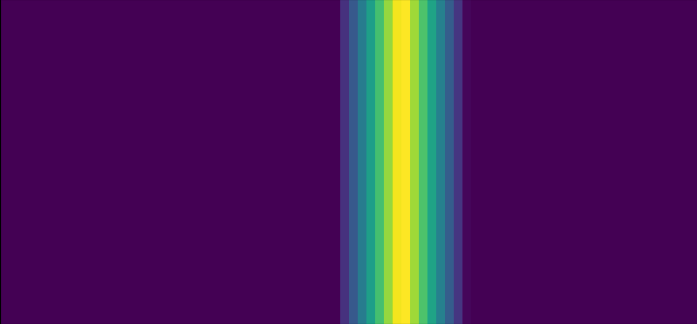}%
		}
		\caption{MAP Explanation}
	\end{subfigure}
	\begin{subfigure}[b]{0.4\textwidth}
		\centering
		\resizebox{\width}{0.9cm}{%
			\includegraphics[scale=1,width=\textwidth]{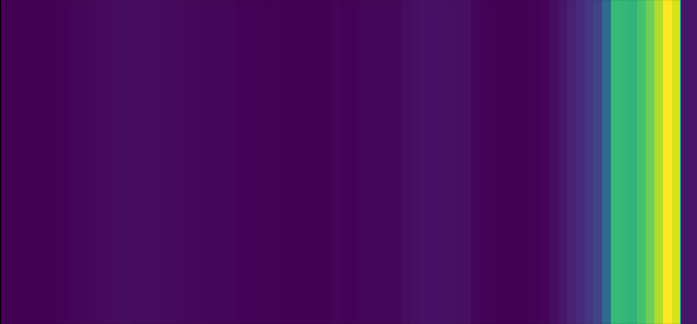}
		}
		\caption{MAP Explanation}
	\end{subfigure}

	\begin{subfigure}[b]{0.4\textwidth}
		\centering
		\resizebox{\width}{0.9cm}{%
			\includegraphics[scale=1,width=\textwidth]{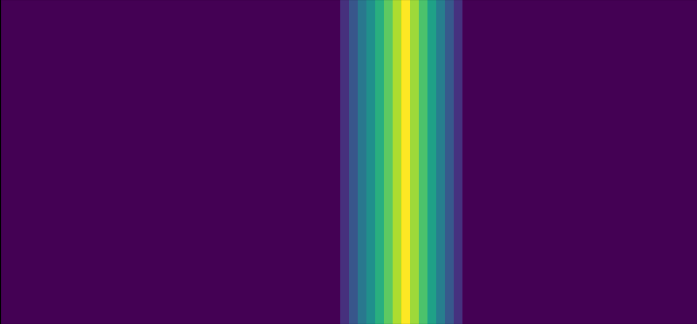}
		}
		\caption{B-Explanation $\alpha=5$ (lower edge, 90\% band)}
	\end{subfigure}
	\begin{subfigure}[b]{0.4\textwidth}
		\centering
		\resizebox{\width}{0.9cm}{%
			\includegraphics[scale=1,width=\textwidth]{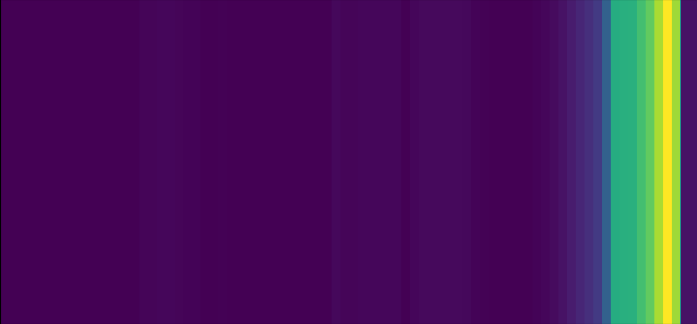}
		}
		\caption{B-Explanation $\alpha=5$ (lower edge, 90\% band)}
	\end{subfigure}

	\begin{subfigure}[b]{0.4\textwidth}
		\centering
		\resizebox{\width}{0.9cm}{%
			\includegraphics[scale=1,width=\textwidth]{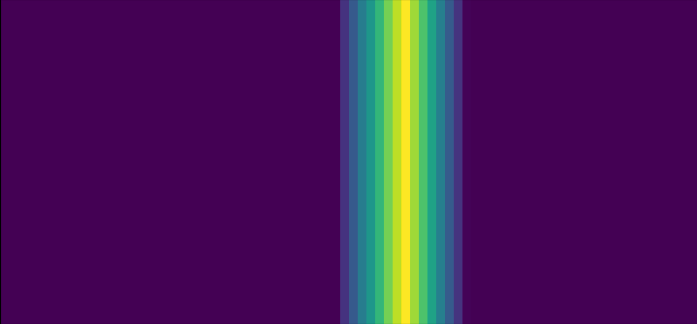}
		}
		\caption{B-Explanation $\alpha=25$ (lower quartile)}
	\end{subfigure}
	\begin{subfigure}[b]{0.4\textwidth}
		\centering
		\resizebox{\width}{0.9cm}{%
			\includegraphics[scale=1,width=\textwidth]{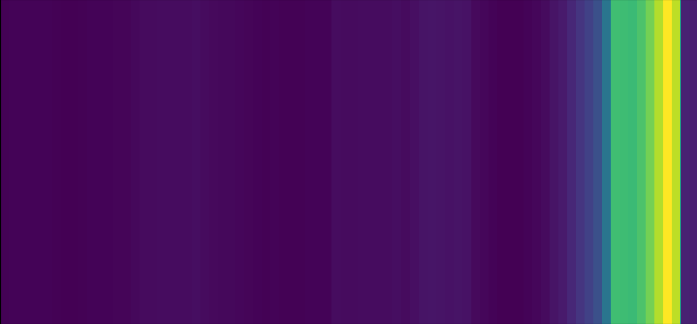}
		}
		\caption{B-Explanation $\alpha=25$ (lower quartile)}
	\end{subfigure}

	\begin{subfigure}[b]{0.4\textwidth}
		\centering
		\resizebox{\width}{0.9cm}{%
			\includegraphics[scale=1,width=\textwidth]{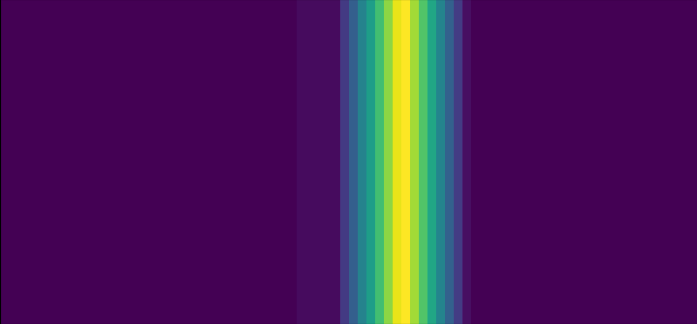}
		}
		\caption{B-Explanation $\alpha=50$ (median)}
	\end{subfigure}
	\begin{subfigure}[b]{0.4\textwidth}
		\centering
		\resizebox{\width}{0.9cm}{%
			\includegraphics[scale=1,width=\textwidth]{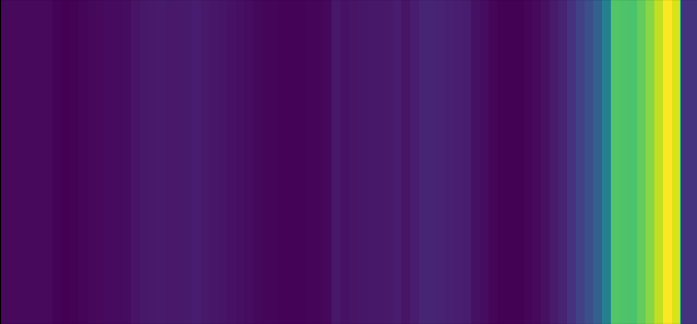}
		}
		\caption{B-Explanation $\alpha=50$ (median)}
	\end{subfigure}

	\begin{subfigure}[b]{0.4\textwidth}
		\centering
		\resizebox{\width}{0.9cm}{%
			\includegraphics[scale=1,width=\textwidth]{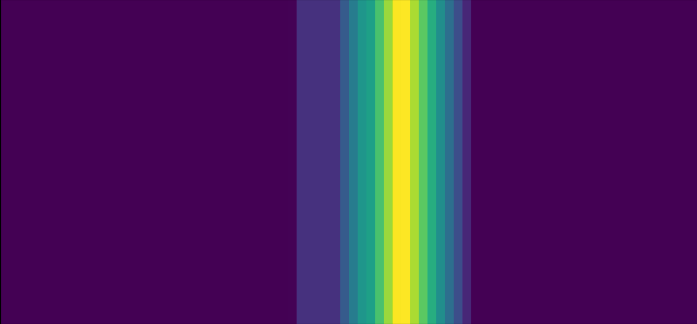}
		}
		\caption{B-Explanation $\alpha=75$ (upper quartile)}
	\end{subfigure}
	\begin{subfigure}[b]{0.4\textwidth}
		\centering
		\resizebox{\width}{0.9cm}{%
			\includegraphics[scale=1,width=\textwidth]{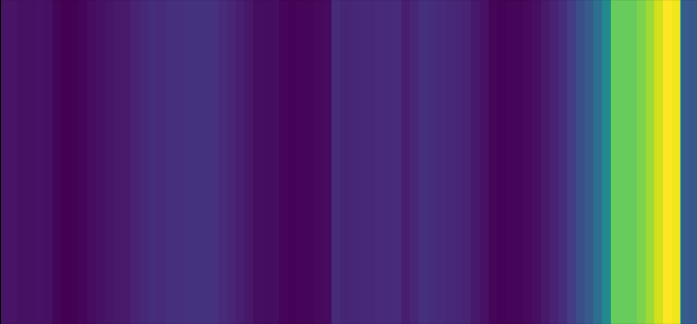}
		}
		\caption{B-Explanation $\alpha=75$ (upper quartile)}
	\end{subfigure}

	\begin{subfigure}[b]{0.4\textwidth}
		\centering
		\resizebox{\width}{0.9cm}{%
			\includegraphics[scale=1,width=\textwidth]{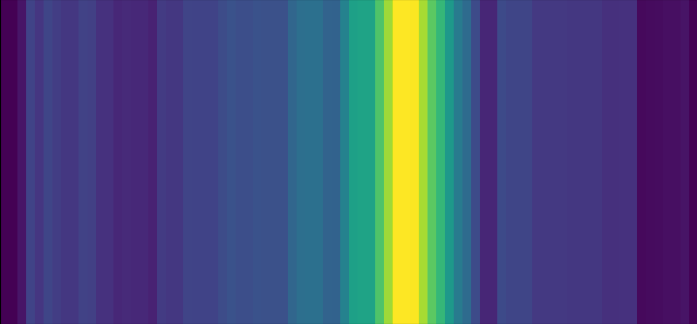}
		}
		\caption{B-Explanation $\alpha=95$ (upper edge, 90\% band) \\ (a) Flicker+Swell}
	\end{subfigure}
	\begin{subfigure}[b]{0.4\textwidth}
		\centering
		\resizebox{\width}{0.9cm}{%
			\includegraphics[scale=1,width=\textwidth]{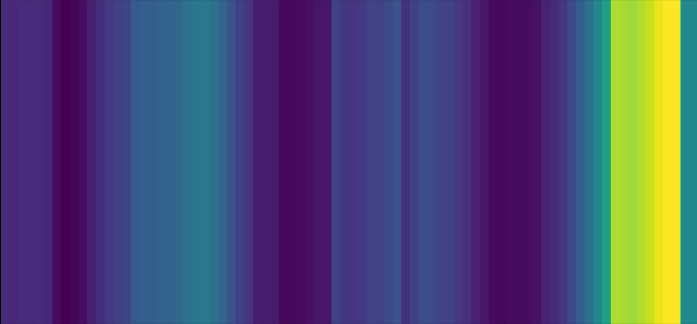}
		}
		\caption{B-Explanation $\alpha=95$ (upper edge, 90\% band) \\ (b) Sag+Harmonics}
	\end{subfigure}

	\begin{center}
		\begin{subfigure}[b]{0.4\textwidth}
			\centering
			\resizebox{\width}{0.45cm}{%
				\includegraphics[scale=1,width=\textwidth]{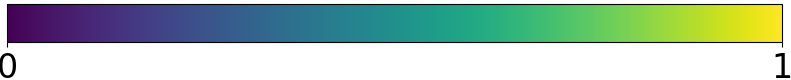}
			}
			\caption{(c) Colormap}
		\end{subfigure}
	\end{center}
	\caption{Uncertainty quantification of PQD classifier explanations with min--max normalized relevance scores. Rows show the MAP explanation followed by percentile B-explanations at $\alpha \in \{5,25,50,75,95\}$; the pair $\alpha=5$ and $\alpha=95$ bounds a $90\%$ explanation band, and $\alpha=25/50/75$ are the quartile and median summaries. Experimental setup as in Sections \ref{sec:3.1} and \ref{sec:4.1}.}
	\label{fig:b-occlusion}
\end{figure*}

\begin{figure*}[hbt]
	\centering
	\includegraphics[scale=1,width=1.1\textwidth]{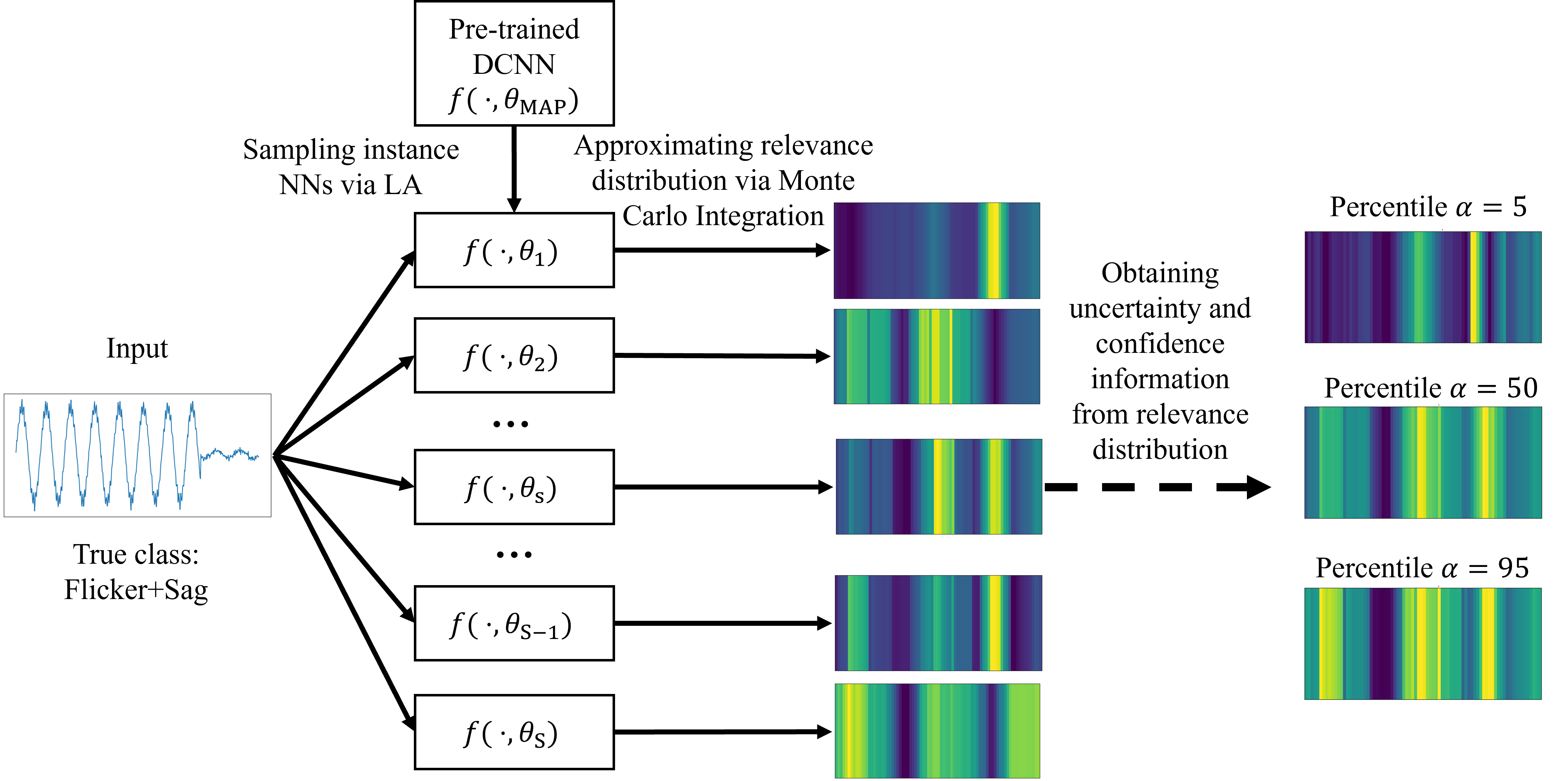}
	\caption{Overview of the post-hoc B-explanation method for a deployed PQD classifier.}
	\label{fig:B-Occ framework}
\end{figure*}

\subsection{Multi-Modality of Explanations}
\label{sec:cluster}
\begin{figure*}[htbp]
	\captionsetup[subfigure]{labelformat=empty}
	\centering
	\begin{subfigure}[b]{0.202\textwidth}
		\centering
		\resizebox{\width}{3.2cm}{%
			\includegraphics[scale=1,width=\textwidth]{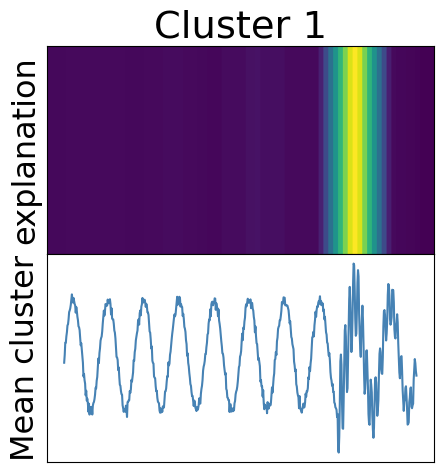}
		}
	\end{subfigure}
	\begin{subfigure}[b]{0.186\textwidth}
		\centering
		\resizebox{\width}{3.2cm}{%
			\includegraphics[scale=1,width=\textwidth]{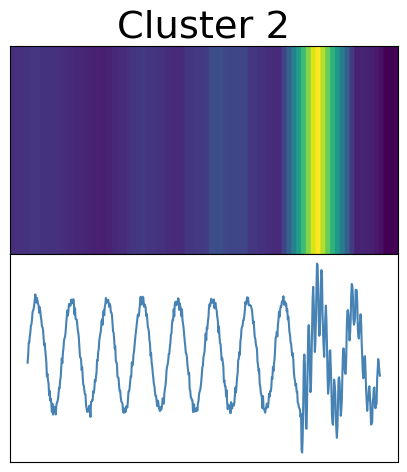}}
	\end{subfigure}
	\begin{subfigure}[b]{0.186\textwidth}
		\centering
		\resizebox{\width}{3.2cm}{%
			\includegraphics[scale=1,width=\textwidth]{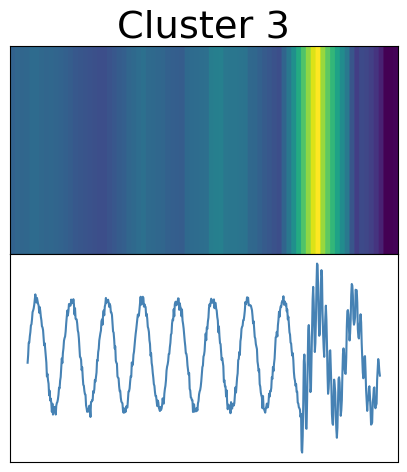}}
	\end{subfigure}
	\begin{subfigure}[b]{0.186\textwidth}
		\centering
		\resizebox{\width}{3.2cm}{%
			\includegraphics[scale=1,width=\textwidth]{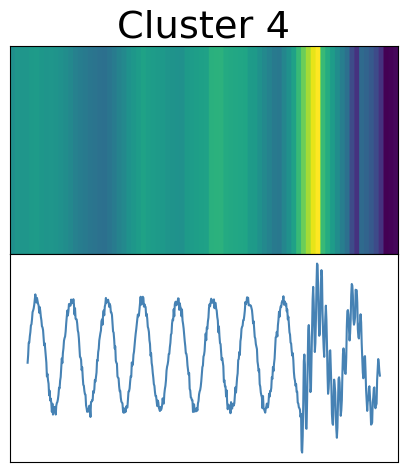}}
	\end{subfigure}
	\begin{subfigure}[b]{0.186\textwidth}
		\centering
		\resizebox{\width}{3.2cm}{%
			\includegraphics[scale=1,width=\textwidth]{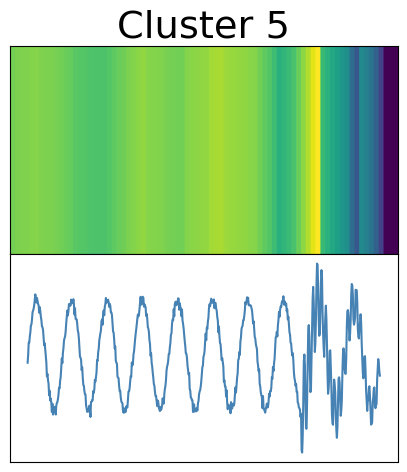}}
	\end{subfigure}

	\begin{subfigure}[b]{0.204\textwidth}
		\centering
		\resizebox{\width}{2.5cm}{%
			\includegraphics[scale=1,width=\textwidth]{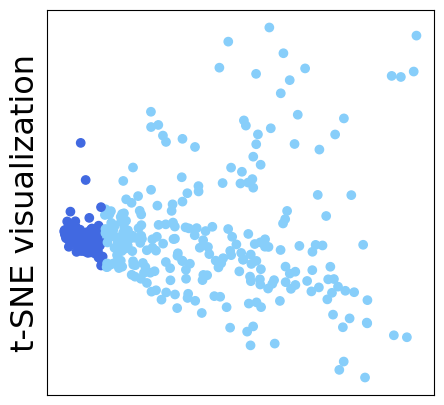}}
	\end{subfigure}
	\begin{subfigure}[b]{0.186\textwidth}
		\centering
		\resizebox{\width}{2.5cm}{%
			\includegraphics[scale=1,width=\textwidth]{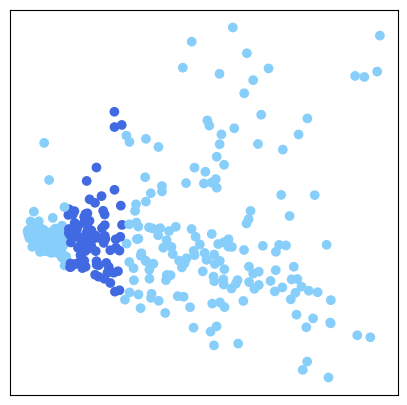}}
	\end{subfigure}
	\begin{subfigure}[b]{0.186\textwidth}
		\centering
		\resizebox{\width}{2.5cm}{%
			\includegraphics[scale=1,width=\textwidth]{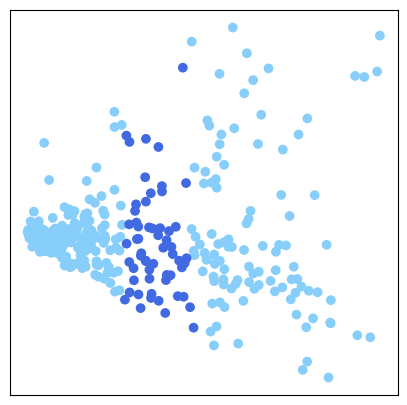}}
	\end{subfigure}
	\begin{subfigure}[b]{0.186\textwidth}
		\centering
		\resizebox{\width}{2.5cm}{%
			\includegraphics[scale=1,width=\textwidth]{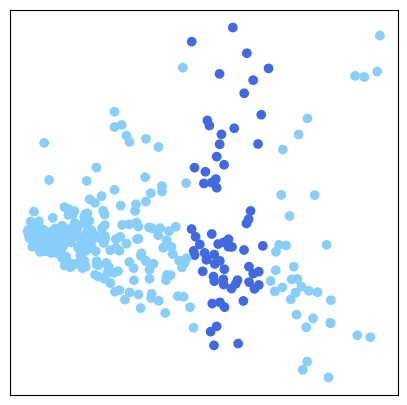}}
	\end{subfigure}
	\begin{subfigure}[b]{0.186\textwidth}
		\centering
		\resizebox{\width}{2.5cm}{%
			\includegraphics[scale=1,width=\textwidth]{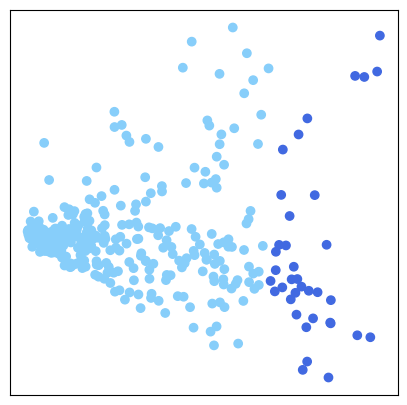}}
	\end{subfigure}
	\caption{Visualization of explanation multi-modality in PQD classifiers (disturbance: oscillatory transient). The first row presents the mean explanation for each cluster; the second row shows the t-SNE projection of the explanation distribution with highlighted cluster assignments. Experimental setup as in Sections \ref{sec:3.1} and \ref{sec:4.1}.}
	\label{fig:cluster}
\end{figure*}

Percentile maps summarize the explanation distribution marginally, per time index. The joint structure is also informative. Following \cite{bykov2021explaining}, sampled relevance vectors are clustered to reveal distinct explanation modes, i.e., internally consistent attribution patterns that differ between groups of posterior samples. In Fig.~\ref{fig:cluster}, 500 sampled explanations for an oscillatory-transient signal partition into five $k$-means clusters, visualized with t-distributed stochastic neighbor embedding (t-SNE) \cite{wani2025comprehensive}. All corresponding models classify the input correctly, yet the dominant relevance shifts gradually away from the true disturbance region across clusters, and the tightest cluster concentrates on it. The analysis shows that even the single-mode Laplace posterior carries substantial, structured explanation variability, and it provides a qualitative check on what percentile summaries aggregate over.

\subsection{Algorithm and Computational Complexity}
\label{sec:complexity}

Algorithm~\ref{alg:bexp} summarizes the complete procedure. Table~\ref{tb:variables} defines its inputs and outputs, and these symbols are used consistently below.

\begin{table}[t]
	\footnotesize
	\centering
	\caption{Input and output variables of Algorithm~\ref{alg:bexp}.}
	\label{tb:variables}
	\begin{tabular}{llp{6.2cm}}
		\toprule
		Symbol & Type & Meaning \\
		\midrule
		$\theta_{\text{MAP}}$ & input & checkpoint of the deployed classifier \\
		$\mathcal{D}$ & input & training set (for curvature accumulation only) \\
		$x$ & input & waveform to be explained, length $N$ \\
		$c$ & input & class of interest \\
		$w, v, b$ & input & occlusion window, stride, baseline value \\
		$S$ & input & number of posterior samples \\
		$\mathcal{A}$ & input & percentile levels, default $\{5,25,50,75,95\}$ \\
		$\lambda, s$ & input & prior precision and curvature scale in \eqref{eq:precision} \\
		$q$ & internal & Laplace posterior $\mathcal{N}(\theta_{\text{MAP}}, \Sigma)$ \\
		$R^{(s)}$ & internal & occlusion map of posterior sample $s$ \\
		$\{R^{(\alpha)}\}_{\alpha\in\mathcal{A}}$ & output & percentile B-explanations \\
		$\bar{R}, \hat{\sigma}^2$ & output & mean explanation and explanation variance \\
		$\{[f_{\theta^{(s)}}(x)]\}_{s=1}^S$ & output & predictive samples \\
		\bottomrule
	\end{tabular}
\end{table}

\begin{algorithm}[t]
	\caption{Post-hoc B-explanation generation}
	\label{alg:bexp}
	\begin{algorithmic}[1]
		\Require $\theta_{\text{MAP}}$, $\mathcal{D}$, $x$, $c$, $w$, $v$, $b$, $S$, $\mathcal{A}$, $\lambda$, $s$
		\Ensure $\{R^{(\alpha)}\}_{\alpha\in\mathcal{A}}$, $\bar{R}$, $\hat{\sigma}^2$, predictive samples
		\State \textbf{(offline, once per model)} accumulate diagonal Fisher $F_{\mathrm{diag}}$ over $\mathcal{D}$ at $\theta_{\text{MAP}}$
		\State $\Sigma \gets (s\,F_{\mathrm{diag}} + \lambda I)^{-1}$ \Comment{Laplace posterior \eqref{eq:precision}}
		\For{$s' = 1$ to $S$} \Comment{online, per input $x$}
			\State sample $\theta^{(s')} \sim \mathcal{N}(\theta_{\text{MAP}}, \Sigma)$
			\State $p^{(s')} \gets f_{\theta^{(s')}}(x)$ \Comment{predictive sample}
			\For{$t = 0$ to $T-1$, $T = \lfloor (N-w)/v \rfloor + 1$}
				\State form $\tilde{x}^{(t)}$ by writing $b$ into $x[vt{:}vt{+}w{-}1]$
				\State $R_t \gets [p^{(s')}]_c - [f_{\theta^{(s')}}(\tilde{x}^{(t)})]_c$
			\EndFor
			\State $R^{(s')}[n] \gets \frac{1}{|T_n|}\sum_{t\in T_n} R_t$ for all $n$ \Comment{Eq.~\eqref{eq:occ_avg}}
		\EndFor
		\State $\bar{R} \gets \frac{1}{S}\sum_{s'} R^{(s')}$;\quad $\hat{\sigma}^2[n] \gets \frac{1}{S}\sum_{s'}(R^{(s')}[n]-\bar{R}[n])^2$
		\For{$\alpha \in \mathcal{A}$}
			\State $R^{(\alpha)}[n] \gets$ $\lceil \alpha S/100\rceil$-th order statistic of $\{R^{(s')}[n]\}_{s'}$ for all $n$
		\EndFor
	\end{algorithmic}
\end{algorithm}

The time complexity of the procedure is derived as follows. Let $C_f(N)$ denote the cost of one forward pass, which is linear in $N$ for the convolutional architecture of Table~\ref{tb:DCNN}. One occlusion map requires $T+1$ forward passes with $T = \lfloor (N-w)/v \rfloor + 1$ window positions, and the full B-explanation requires
\begin{equation}
	\label{eq:complexity}
	\underbrace{S\,(T+1)\,C_f(N)}_{\text{explanation generation}}
	+ \underbrace{O(S N \log S)}_{\text{percentile sorting}}
	+ \underbrace{O(|\mathcal{D}|\, C_f(N))}_{\text{offline, once per model}} .
\end{equation}
Since $T = O(N/v)$, the generation cost scales linearly in the number of occlusion positions, and hence linearly in signal length for fixed window geometry, and linearly in $S$. For the setting used here ($N=640$, $w=64$, $v=8$, $S=100$, giving $T=73$), one B-explanation requires $7{,}400$ forward passes. Since every occluded signal is independent, these evaluations can be fully batched. Wall-clock measurements on GPU and CPU-only hardware are reported in Section~\ref{sec:timing}. The offline Laplace fit requires one pass over the training data and is amortized over all subsequent explanations, and the memory overhead is one diagonal curvature vector, i.e., one additional copy of the model parameters. Relative to a MAP explanation, the online cost factor is exactly $S$. Relative to a deep ensemble, in which every additional posterior sample requires a separate training run, the B-explanation draws all $S$ samples from a single trained model at inference cost only, which is advantageous whenever retraining is impossible or expensive.

\section{Evaluation Framework}
\label{sec:2}
The framework evaluates explanations from three perspectives: localization against physical ground truth, faithfulness to the classifier, and statistical significance of comparisons.

\subsection{Qualitative Evaluation}
\label{sec:3.1}
Relevance vectors are normalized to $[0, 1]$ by min--max normalization and visualized with the perceptually uniform `viridis' colormap (Fig.~\ref{fig:b-occlusion}(c)). The same absolute-value and rescaling steps precede metric computation; the rescaling leaves top-$L$ IoU invariant and has no measurable effect on RMA, since the batch minimum of the absolute maps is essentially zero.

\subsection{Localization Metrics}
\label{sec:loc_metrics}
PQD classification offers what most XAI applications lack \cite{nauta2023anecdotal}: physical ground truth. Disturbance regions are defined against the nominal reference waveform
\begin{equation}
	x_0[n] = A\sin(\omega n+\phi) + \sigma[n],
\end{equation}
where $\sigma$ denotes measurement noise if the reference is observed rather than generated.

\begin{definition}[$\epsilon$-Ground-Truth Disturbance Segmentation]
	\label{def:gt_mask}
	Let $\epsilon \geq 0$ be a threshold representing the expected noise level of the reference. The $\epsilon$-ground-truth disturbance mask is
	\begin{equation}
		R_{GT}(x)[n] = \begin{cases}
			1, & \text{if }|x[n]-x_0[n]|>\epsilon, \\
			0, & \text{otherwise,}
		\end{cases} \quad \forall n \in\{1,\ldots, N\},
	\end{equation}
	and $\mathcal{I}(x) = \{n : R_{GT}(x)[n]=1\}$ with $L = |\mathcal{I}(x)|$. When the generator provides the exact noise-free reference, as in this study, the mask is computed exactly with $\sigma = 0$ and $\epsilon = 0$.
\end{definition}

Signed occlusion scores are converted to absolute values prior to evaluation, yielding the nonnegative maps assumed below; instances with zero total relevance would be excluded from RMA (none occurred). Two complementary metrics measure alignment between a nonnegative relevance map $R$ and the mask:
\begin{itemize}
	\item \textbf{Relevance Mass Accuracy (RMA)} \cite{arras2003ground}: the proportion of total relevance inside the disturbance region,
	\begin{equation}
		\text{RMA}(R,x) = \frac{\sum_{n \in \mathcal{I}(x)} R[n]}{\sum_{n=1}^{N} R[n]} .
	\end{equation}
	\item \textbf{Intersection over Union (IoU)} \cite{machlev2021measuring,8953982}: spatial overlap after top-$L$ binarization. With $\top_L(R)$ the indices of the $L$ largest values of $R$ and $\hat{R}$ the corresponding binary mask,
	\begin{equation}
		\text{IoU}(R,x) = \frac{|\top_L(R) \cap \mathcal{I}(x)|}{|\top_L(R) \cup \mathcal{I}(x)|} .
	\end{equation}
	Fixing the predicted-mask cardinality to $L$ removes threshold selection as a confound, making IoU a pure localization measure.
\end{itemize}
Both metrics lie in $[0,1]$, larger is better. RMA is sensitive to the distribution of relevance mass without binarization; IoU measures overlap after binarization.

\subsection{Faithfulness Metrics}
\label{sec:faith_metrics}
Localization metrics compare explanations to physical ground truth, whereas faithfulness metrics test whether an explanation reflects the features the classifier actually uses, and enable comparison with attribution methods that do not target localization. Two standard measures are adopted \cite{petsiuk2018rise,samek2016evaluating}. The deletion score replaces input samples by the baseline in decreasing order of relevance and records the class probability as a function of the fraction deleted; the area under this curve (deletion AUC) is reported, and lower values indicate a more faithful explanation. The insertion score starts from the baseline signal and restores samples in decreasing order of relevance; higher insertion AUC indicates a more faithful explanation. In addition, monotonicity is reported as the Spearman rank correlation between the relevance assigned to each occlusion window and the probability drop caused by occluding that window alone. Deletion and insertion are computed with a fixed deletion granularity of one occlusion window to avoid favoring any method's native resolution. These metrics permit a comparison of B-explanations with Local Interpretable Model-agnostic Explanations (LIME) \cite{ribeiro2016trust} and SHapley Additive exPlanations (SHAP) \cite{lundberg2017unified} in Section~\ref{sec:xai_comparison}.

\subsection{Statistical Protocol}
\label{sec:stats}
All headline comparisons are reported as mean $\pm$ standard deviation over five held-out test splits, with paired two-sided tests computed on the per-split macro scores of the compared configurations evaluated on identical splits. Holm correction is applied within each metric family, and $95\%$ confidence intervals of paired differences are reported alongside $p$-values. Differences are described as improvements only when the corrected test rejects at the $0.05$ level.

\section{Numerical Results and Discussions}
\label{sec:3}

\subsection{Experimental Setup}
\label{sec:4.1}
The deployed classifier is the DCNN of Table~\ref{tb:DCNN}, trained with cross-entropy loss and the Adam optimizer with an initial learning rate of $0.01$ halved every 10 epochs, 100 training epochs with the best validation checkpoint retained, and $L_2$ regularization with penalty $0.0001$. Occlusion sensitivity uses window $w=64$ (one nominal cycle) and stride $v=8$ with a zero occlusion baseline ($b=0$), per the resolution analysis of Section~\ref{sec: occ}. Attribution is computed for the ground-truth class ($c=y$), so the localization metrics measure how well the explanation recovers the labeled disturbance. The Laplace approximation uses the diagonal formulation of \cite{ritter2018scalable}. The prior precision and curvature scale in \eqref{eq:precision} are calibrated by matching the accuracy and predictive entropy of sampled models to the MAP model on held-out data over a logarithmic grid, giving $\lambda = 10^{5}$ and $s = 10^{11}$. $S=100$ posterior samples are drawn unless stated otherwise. Base-model performance is given in Table~\ref{tb:accuracy}: sampled models match the MAP accuracy within 0.04 percentage points on average while carrying non-degenerate predictive entropy.

Experiments use a synthetic dataset covering the 15 PQD types of \cite{wang2019novel} plus an undisturbed class, generated with the open-source PQD signal generator of \cite{machlev2021open} in its noise-free mode together with the clean reference waveform of each signal. Signals are sampled at 3.2~kHz with a 50~Hz nominal frequency over 10 cycles ($N=640$). Noise-free generation makes the ground-truth disturbance masks exact ($\epsilon=0$ in Definition~\ref{def:gt_mask}); robustness to measurement noise is evaluated separately by controlled injection in Section~\ref{sec:noise_exp}. The training pool comprises 12{,}960 training and 1{,}440 validation waveforms (90\%/10\% split), and evaluation uses five held-out test splits of 1{,}500 disturbance instances each. Localization metrics are computed on the 15 disturbance classes only, since the undisturbed class has an empty ground-truth mask. The MAP model and all ensemble members are trained on this pool under the protocol above, differing only in random seed. Real-world evaluation uses the University of Cadiz recordings \cite{h2k88d-17} (Section~\ref{sec:realworld}).

\begin{table}
	\footnotesize
	\centering
	\caption{Average entropy and accuracy of base models on the five test splits. For the Bayesian DCNN, mean and standard deviation are computed from 100 sampled models.}
	\label{tb:accuracy}
	\begin{tabular}{l|l|l}
		\hline
		& DCNN(MAP) & Bayesian DCNN\\
		\hline
		Entropy &  0.0045 &  0.0230 $\pm$ 0.0279\\
		\hline
		Accuracy & 0.9820 &  0.9816 $\pm$ 0.0088\\
		\hline
	\end{tabular}
\end{table}

\subsection{Percentile B-Explanations: Localization Performance}
\label{sec:percentile_results}
\begin{table*}[!t]
	\begin{adjustwidth}{-2.5cm}{-2.5cm}
	\scriptsize
	\centering
	\setlength{\tabcolsep}{4pt}
	\renewcommand{\arraystretch}{1.15}
	\caption{Comparison of MAP and Bayesian explanations (at percentiles $\alpha = 5, 25, 50, 75, 95$) based on relevance mass accuracy (mean\,$\pm$\,standard deviation).}
	\label{tb:rma}
	\begin{tabular}{c l c c c c c c}
		\toprule
		\multirow{2}{*}{\textbf{\#}}
		& \multirow{2}{*}{\textbf{Disturbance Type}}
		& \multirow{2}{*}{\textbf{MAP}}
		& \multicolumn{5}{c}{\textbf{Bayesian Explanation}} \\
		\cmidrule(lr){4-8}
		& & & $\alpha{=}5$ & $\alpha{=}25$ & $\alpha{=}50$
		& $\alpha{=}75$ & $\alpha{=}95$ \\
		\midrule
		1  & Sag
		& $0.5799\pm0.0140$
		& $0.6432\pm0.0134$
		& $0.6148\pm0.0108$
		& $0.6044\pm0.0115$
		& $0.5963\pm0.0126$
		& $0.5810\pm0.0135$ \\
		2  & Swell
		& $0.4489\pm0.0131$
		& $0.6094\pm0.0282$
		& $0.6116\pm0.0224$
		& $0.5983\pm0.0287$
		& $0.5673\pm0.0254$
		& $0.4970\pm0.0169$ \\
		3  & Interruption
		& $0.3397\pm0.0278$
		& $0.3243\pm0.0236$
		& $0.2858\pm0.0200$
		& $0.2608\pm0.0153$
		& $0.2322\pm0.0165$
		& $0.1707\pm0.0159$ \\
		4  & Harmonics
		& $0.9984\pm0.0000$
		& $0.9984\pm0.0000$
		& $0.9984\pm0.0000$
		& $0.9984\pm0.0000$
		& $0.9984\pm0.0000$
		& $0.9984\pm0.0000$ \\
		5  & Flicker
		& $0.9984\pm0.0000$
		& $0.9984\pm0.0000$
		& $0.9984\pm0.0000$
		& $0.9984\pm0.0000$
		& $0.9984\pm0.0000$
		& $0.9984\pm0.0000$ \\
		6  & Oscillatory transient
		& $0.6392\pm0.0082$
		& $0.7267\pm0.0092$
		& $0.7000\pm0.0087$
		& $0.6839\pm0.0079$
		& $0.6679\pm0.0075$
		& $0.6272\pm0.0161$ \\
		7  & Impulsive transient
		& $0.0077\pm0.0002$
		& $0.0081\pm0.0002$
		& $0.0078\pm0.0002$
		& $0.0078\pm0.0002$
		& $0.0077\pm0.0002$
		& $0.0077\pm0.0002$ \\
		8  & Notch
		& $0.0325\pm0.0022$
		& $0.0325\pm0.0022$
		& $0.0325\pm0.0022$
		& $0.0325\pm0.0022$
		& $0.0325\pm0.0022$
		& $0.0325\pm0.0022$ \\
		9  & Spike
		& $0.0308\pm0.0019$
		& $0.0308\pm0.0019$
		& $0.0308\pm0.0019$
		& $0.0308\pm0.0019$
		& $0.0308\pm0.0019$
		& $0.0308\pm0.0019$ \\
		10  & Sag with harmonics
		& $0.9995\pm0.0000$
		& $0.9995\pm0.0001$
		& $0.9995\pm0.0000$
		& $0.9995\pm0.0000$
		& $0.9995\pm0.0000$
		& $0.9995\pm0.0000$ \\
		11  & Swell with harmonics
		& $0.9984\pm0.0000$
		& $0.9984\pm0.0000$
		& $0.9984\pm0.0000$
		& $0.9984\pm0.0000$
		& $0.9984\pm0.0000$
		& $0.9984\pm0.0000$ \\
		12  & Interruption with harmonics
		& $0.9991\pm0.0002$
		& $0.9992\pm0.0002$
		& $0.9993\pm0.0002$
		& $0.9993\pm0.0002$
		& $0.9994\pm0.0002$
		& $0.9995\pm0.0002$ \\
		13  & Flicker with harmonics
		& $0.9985\pm0.0000$
		& $0.9984\pm0.0000$
		& $0.9985\pm0.0000$
		& $0.9985\pm0.0000$
		& $0.9985\pm0.0000$
		& $0.9985\pm0.0000$ \\
		14  & Flicker with sag
		& $0.9996\pm0.0001$
		& $0.9995\pm0.0002$
		& $0.9996\pm0.0001$
		& $0.9996\pm0.0001$
		& $0.9996\pm0.0001$
		& $0.9996\pm0.0001$ \\
		15  & Flicker with swell
		& $0.9989\pm0.0001$
		& $0.9991\pm0.0001$
		& $0.9991\pm0.0001$
		& $0.9991\pm0.0001$
		& $0.9990\pm0.0001$
		& $0.9989\pm0.0001$ \\
		\midrule
		& \textbf{Total Score}
		& $\mathbf{0.6713\pm0.0031}$
		& $\mathbf{0.6911\pm0.0032}$
		& $\mathbf{0.6850\pm0.0029}$
		& $\mathbf{0.6806\pm0.0029}$
		& $\mathbf{0.6751\pm0.0026}$
		& $\mathbf{0.6625\pm0.0017}$ \\
		\bottomrule
	\end{tabular}
	\end{adjustwidth}
\end{table*}
\begin{table*}[!t]
	\begin{adjustwidth}{-2.5cm}{-2.5cm}
	\scriptsize
	\centering
	\setlength{\tabcolsep}{4pt}
	\renewcommand{\arraystretch}{1.15}
	\caption{Comparison of MAP and Bayesian explanations (at percentiles
		$\alpha = 5, 25, 50, 75, 95$) based on intersection over union
		score (mean\,$\pm$\,standard deviation).}
	\label{tb:iou}
	\begin{tabular}{c l c c c c c c}
		\toprule
		\multirow{2}{*}{\textbf{\#}}
		& \multirow{2}{*}{\textbf{Disturbance Type}}
		& \multirow{2}{*}{\textbf{MAP}}
		& \multicolumn{5}{c}{\textbf{Bayesian Explanation}} \\
		\cmidrule(lr){4-8}
		& & & $\alpha{=}5$ & $\alpha{=}25$ & $\alpha{=}50$
		& $\alpha{=}75$ & $\alpha{=}95$ \\
		\midrule
		1  & Sag
		& $0.6667\pm0.0116$
		& $0.7014\pm0.0180$
		& $0.6892\pm0.0175$
		& $0.6852\pm0.0169$
		& $0.6786\pm0.0121$
		& $0.6658\pm0.0115$ \\
		2  & Swell
		& $0.6916\pm0.0151$
		& $0.6981\pm0.0171$
		& $0.7613\pm0.0103$
		& $0.7731\pm0.0177$
		& $0.7392\pm0.0126$
		& $0.7052\pm0.0108$ \\
		3  & Interruption
		& $0.3779\pm0.0320$
		& $0.3411\pm0.0323$
		& $0.3263\pm0.0378$
		& $0.2906\pm0.0295$
		& $0.2460\pm0.0285$
		& $0.1675\pm0.0246$ \\
		4  & Harmonics
		& $0.9973\pm0.0001$
		& $0.9969\pm0.0000$
		& $0.9969\pm0.0000$
		& $0.9969\pm0.0000$
		& $0.9969\pm0.0000$
		& $0.9970\pm0.0001$ \\
		5  & Flicker
		& $0.9969\pm0.0000$
		& $0.9969\pm0.0000$
		& $0.9969\pm0.0000$
		& $0.9969\pm0.0000$
		& $0.9969\pm0.0000$
		& $0.9969\pm0.0000$ \\
		6  & Oscillatory transient
		& $0.7387\pm0.0062$
		& $0.7091\pm0.0110$
		& $0.7302\pm0.0119$
		& $0.7386\pm0.0139$
		& $0.7368\pm0.0105$
		& $0.7169\pm0.0076$ \\
		7  & Impulsive transient
		& $0.2625\pm0.0113$
		& $0.1846\pm0.0270$
		& $0.2406\pm0.0376$
		& $0.2436\pm0.0192$
		& $0.2588\pm0.0227$
		& $0.2424\pm0.0166$ \\
		8  & Notch
		& $0.0296\pm0.0033$
		& $0.0219\pm0.0030$
		& $0.0295\pm0.0045$
		& $0.0299\pm0.0039$
		& $0.0307\pm0.0044$
		& $0.0279\pm0.0032$ \\
		9  & Spike
		& $0.0157\pm0.0035$
		& $0.0145\pm0.0016$
		& $0.0114\pm0.0013$
		& $0.0106\pm0.0025$
		& $0.0141\pm0.0030$
		& $0.0166\pm0.0048$ \\
		10  & Sag with harmonics
		& $0.9980\pm0.0001$
		& $0.9979\pm0.0001$
		& $0.9979\pm0.0002$
		& $0.9979\pm0.0002$
		& $0.9980\pm0.0001$
		& $0.9980\pm0.0001$ \\
		11  & Swell with harmonics
		& $0.9976\pm0.0001$
		& $0.9978\pm0.0001$
		& $0.9978\pm0.0001$
		& $0.9977\pm0.0001$
		& $0.9975\pm0.0000$
		& $0.9975\pm0.0001$ \\
		12  & Interruption with harmonics
		& $0.9973\pm0.0001$
		& $0.9973\pm0.0001$
		& $0.9974\pm0.0001$
		& $0.9975\pm0.0001$
		& $0.9975\pm0.0001$
		& $0.9973\pm0.0001$ \\
		13  & Flicker with harmonics
		& $0.9969\pm0.0000$
		& $0.9969\pm0.0000$
		& $0.9970\pm0.0001$
		& $0.9970\pm0.0000$
		& $0.9971\pm0.0001$
		& $0.9971\pm0.0001$ \\
		14  & Flicker with sag
		& $0.9975\pm0.0001$
		& $0.9975\pm0.0001$
		& $0.9975\pm0.0002$
		& $0.9975\pm0.0001$
		& $0.9975\pm0.0002$
		& $0.9975\pm0.0001$ \\
		15  & Flicker with swell
		& $0.9979\pm0.0002$
		& $0.9970\pm0.0001$
		& $0.9972\pm0.0001$
		& $0.9975\pm0.0002$
		& $0.9978\pm0.0001$
		& $0.9980\pm0.0002$ \\
		\midrule
		& \textbf{Total Score}
		& $\mathbf{0.7175\pm0.0024}$
		& $\mathbf{0.7099\pm0.0028}$
		& $\mathbf{0.7178\pm0.0017}$
		& $\mathbf{0.7167\pm0.0025}$
		& $\mathbf{0.7122\pm0.0034}$
		& $\mathbf{0.7014\pm0.0030}$ \\
		\bottomrule
	\end{tabular}
	\end{adjustwidth}
\end{table*}

RMA and IoU scores for each disturbance type are computed on the five test splits for MAP and percentile B-explanations (Tables \ref{tb:rma} and \ref{tb:iou}). At the aggregate level, the $\alpha=5$ B-explanation attains a total RMA of $0.6911$ against $0.6713$ for MAP, a paired improvement of $+0.0198$; under the protocol of Section~\ref{sec:stats}, the $95\%$ confidence interval of this difference across the five splits is $[0.0180, 0.0215]$, entirely above zero, and the improvement remains significant after correction for multiple comparisons. The improvements at $\alpha=25$, $50$, and $75$ are also significant. For IoU, the $\alpha=25$ B-explanation ($0.7178$) is statistically indistinguishable from MAP ($0.7175$), so the RMA gains at the low-to-middle percentiles come at no IoU cost.

Per-class behavior differs markedly across disturbance families. Table~\ref{tb:guidance} summarizes the observed patterns. Since these preferences are read from the same test results, the table is an exploratory summary rather than an independently calibrated rule. Section~\ref{sec:sensitivity_exp} shows that the patterns persist across sampling and window configurations. For harmonics, flicker, and their composites, both metrics approach 1.0 for all methods: these disturbances affect most of the waveform, so localization is straightforward and B-explanations contribute reliability information rather than accuracy. For sag, swell, and oscillatory transient, B-explanations outperform MAP at low-to-middle percentiles: sag and oscillatory transient benefit most from the consensus summary ($\alpha=5$ RMA $0.643$ vs $0.580$ and $0.727$ vs $0.639$, respectively), whereas swell peaks slightly higher, at $\alpha=25$ for RMA ($0.612$ vs $0.449$) and $\alpha=50$ for IoU ($0.773$ vs $0.692$, per-class significant), indicating that its evidence is shared by most but not all of the posterior. For interruption, no percentile improves on MAP and both metrics decrease with $\alpha$; the degradation at $\alpha=50$ and above is significant. This behavior has a physical origin: an interruption is a near-zero-amplitude segment, so occluding the true disturbance interval with the zero baseline changes the input only marginally, the in-region relevance is intrinsically weak, and higher percentiles accumulate the scattered off-region relevance of individual posterior members rather than additional evidence. For interruption, the B-explanation therefore contributes reliability information rather than sharper localization. Finally, for impulsive transient, notch, and spike, all methods score low: these events are shorter than the occlusion resolution, and the limiting factor is the classifier and operator resolution rather than explanation uncertainty.

\begin{table}[t]
	\scriptsize
	\centering
	\setlength{\tabcolsep}{4pt}
	\renewcommand{\arraystretch}{1.15}
	\caption{Summary of B-explanation benefit by disturbance family and the percentile at which the strongest localization was observed. Higher percentiles are not recommended for localization; together with the lower percentiles they form the distribution-free $90\%$ explanation band of Section~\ref{sec:uq} and serve as reliability diagnostics.}
	\label{tb:guidance}
	\begin{tabularx}{\linewidth}{lXl}
		\toprule
		Disturbance family & Benefit of B-explanation & Best at \\
		\midrule
		Sag, oscillatory transient & sharper localization via posterior consensus & $\alpha=5$ \\
		Swell & strongest localization gains, reached at moderately inclusive summaries & $\alpha=25$--$50$ \\
		Interruption & adds a per-instance reliability measure to the MAP explanation & MAP or $\alpha=5$ \\
		Harmonics, flicker, composites & confirms explanation stability where localization is already near-saturated & $\alpha=50$ \\
		Impulsive transient, notch, spike & the $5$--$95$ band flags low-confidence attributions for events below the occlusion resolution & $\alpha=50$ + $5$--$95$ band \\
		\bottomrule
	\end{tabularx}
\end{table}

\subsection{Comparison with Alternative Bayesian Inference Schemes}
\label{sec:bnn_comparison}
To quantify the effect of the post-hoc restriction, the percentile protocol is repeated with the two most widely used alternatives. For MC dropout \cite{gal2016dropout}, dropout layers with a rate of 0.2 are incorporated into the same CNN architecture and retained during inference. The number of stochastic forward passes is set to $S$. The deep ensemble \cite{lakshminarayanan2017simple} comprises five DCNNs trained under the training protocol of Section~\ref{sec:4.1} and differing only in random seed. All three schemes share the same attribution operator, evaluation splits, and metrics. Sampling budgets differ by construction: Laplace and MC-dropout draw $S=100$ samples, whereas the ensemble has five members, so its $\alpha=5$ and $\alpha=95$ summaries coincide with the member-wise extremes and are reported for completeness rather than as percentile estimates.

\begin{table}[t]
	\scriptsize
	\centering
	\setlength{\tabcolsep}{4pt}
	\renewcommand{\arraystretch}{1.15}
	\caption{Total localization scores (mean $\pm$ standard deviation over five test splits) for the three Bayesian schemes under the percentile protocol, against the deterministic MAP explanation. $^{\dagger}$ marks a significant paired improvement over the deterministic baseline (paired over splits, Holm-corrected, $p<0.05$); test-set accuracy is within $0.2$ percentage points of the MAP model for all schemes.}
	\label{tb:bnn}
	\begin{tabular}{llcc}
		\toprule
		Scheme & Summary & RMA & IoU \\
		\midrule
		\multicolumn{2}{l}{Deterministic (MAP)} & $0.6713\pm0.0031$ & $0.7175\pm0.0024$ \\
		\midrule
		Laplace (proposed) & mean & $0.6684\pm0.0018$ & $0.7004\pm0.0021$ \\ 
		 & $\alpha=5$ & $0.6911\pm0.0032$$^{\dagger}$ & $0.7099\pm0.0028$ \\ 
		 & $\alpha=25$ & $0.6850\pm0.0029$$^{\dagger}$ & $0.7178\pm0.0017$ \\
		 & $\alpha=50$ & $0.6806\pm0.0029$$^{\dagger}$ & $0.7167\pm0.0025$ \\
		 & $\alpha=75$ & $0.6751\pm0.0026$$^{\dagger}$ & $0.7122\pm0.0034$ \\ 
		 & $\alpha=95$ & $0.6625\pm0.0017$ & $0.7014\pm0.0030$ \\ 
		\midrule
		MC-dropout & mean & $0.6282\pm0.0019$ & $0.6880\pm0.0027$ \\ 
		 & $\alpha=5$ & $0.6539\pm0.0028$ & $0.6577\pm0.0036$ \\ 
		 & $\alpha=25$ & $0.6405\pm0.0019$ & $0.6789\pm0.0033$ \\ 
		 & $\alpha=50$ & $0.6319\pm0.0020$ & $0.6843\pm0.0026$ \\ 
		 & $\alpha=75$ & $0.6261\pm0.0019$ & $0.6746\pm0.0015$ \\ 
		 & $\alpha=95$ & $0.6210\pm0.0020$ & $0.6416\pm0.0029$ \\ 
		\midrule
		Deep ensemble & mean & $0.6248\pm0.0024$ & $0.6986\pm0.0030$ \\ 
		 & $\alpha=5$ & $0.6423\pm0.0025$ & $0.6743\pm0.0024$ \\ 
		 & $\alpha=25$ & $0.6408\pm0.0029$ & $0.6761\pm0.0021$ \\ 
		 & $\alpha=50$ & $0.6317\pm0.0038$ & $0.6822\pm0.0031$ \\ 
		 & $\alpha=75$ & $0.6233\pm0.0033$ & $0.6821\pm0.0074$ \\ 
		 & $\alpha=95$ & $0.6190\pm0.0020$ & $0.6926\pm0.0019$ \\ 
		\bottomrule
	\end{tabular}
\end{table}

Table~\ref{tb:bnn} reports the comparison. The Laplace method is the only scheme that matches or improves the deterministic baseline: its low-percentile summaries increase RMA significantly and its $\alpha=25$--$50$ summaries preserve IoU, whereas every MC-dropout and deep-ensemble summary is significantly below the baseline on both metrics, with deficits between $0.02$ and $0.08$. An interpretation consistent with the clustering analysis of Section~\ref{sec:cluster} is that Laplace samples remain within the posterior mode of the deployed classifier, so percentile aggregation filters idiosyncratic relevance while retaining the shared evidence; ensemble members occupy different modes and attend to genuinely different signal features, so their percentile aggregates blur the localization, and MC-dropout perturbs the decision function in a way that is not matched to the training-data curvature. This ranking is specific to the consensus-oriented percentile protocol on this architecture: with a lightweight classifier and a mean-summary protocol, ensemble aggregation has been observed to localize best \cite{chen2026unified}. Under this protocol, the alternatives that require additional training or architectural modification also produced less accurate explanations.

\subsection{Comparison with Deterministic Attribution Baselines}
\label{sec:xai_comparison}
B-explanations are compared against deterministic occlusion, LIME \cite{ribeiro2016trust}, and KernelSHAP \cite{lundberg2017unified} using the localization metrics of Section~\ref{sec:loc_metrics} and the faithfulness metrics of Section~\ref{sec:faith_metrics}, on 20 instances per disturbance class of the first test split (300 in total). LIME and KernelSHAP use 128 perturbation samples with a block feature mask of width 16 on the MAP model. Computational budgets differ across methods: an $S=100$ B-explanation evaluates $7{,}400$ forward passes, whereas LIME and KernelSHAP run at their common default budgets, so the comparison contrasts each method at its typical configuration rather than at matched cost.

\begin{table}[t]
	\scriptsize
	\centering
	\setlength{\tabcolsep}{4pt}
	\renewcommand{\arraystretch}{1.15}
	\caption{Localization and faithfulness comparison with deterministic attribution baselines (300 instances). Del.\ = deletion AUC (lower is better), Ins.\ = insertion AUC (higher is better), Mono.\ = window-level monotonicity. \textbf{Bold} marks the best value in each column.}
	\label{tb:faith}
	\begin{tabular}{lccccc}
		\toprule
		Method & RMA & IoU & Del.\ $\downarrow$ & Ins.\ $\uparrow$ & Mono. \\
		\midrule
		Occlusion (MAP) & $0.669$ & $0.715$ & $0.294$ & $0.450$ & $\mathbf{0.220}$ \\
		B-explanation (mean) & $0.669$ & $0.710$ & $\mathbf{0.263}$ & $\mathbf{0.461}$ & $0.159$ \\
		B-explanation ($\alpha=5$) & $0.688$ & $\mathbf{0.718}$ & $0.294$ & $0.453$ & $0.167$ \\
		B-explanation ($\alpha=95$) & $0.663$ & $0.712$ & $0.270$ & $0.459$ & $0.175$ \\
		LIME & $\mathbf{0.754}$ & $0.639$ & $0.293$ & $0.451$ & $0.052$ \\
		KernelSHAP & $0.635$ & $0.621$ & $0.338$ & $0.457$ & $0.025$ \\
		\bottomrule
	\end{tabular}
\end{table}

Table~\ref{tb:faith} reports the comparison. The occlusion-based methods dominate on faithfulness: the mean B-explanation attains the best deletion and insertion scores, and all occlusion variants achieve monotonicity between $0.16$ and $0.22$, whereas LIME and KernelSHAP are close to zero ($0.05$ and $0.03$), indicating that their relevance rankings correspond only weakly to the classifier's actual sensitivity to segment removal. LIME attains the highest RMA by concentrating relevance mass on few segments, but its IoU is the second lowest, confirming that mass concentration and interval localization are different properties. KernelSHAP trails on all criteria at the evaluated sampling budget. Among all methods, the $\alpha=5$ B-explanation offers the best combination of localization (highest IoU and second-highest RMA) and competitive faithfulness, supporting its role as the default operating point of Table~\ref{tb:guidance}.

\subsection{Sensitivity Analyses}
\label{sec:sensitivity_exp}
The robustness of the method to its two main computational choices is quantified: the number of posterior samples $S \in \{10, 25, 50, 100, 200\}$, which governs the Monte Carlo error term of \eqref{eq:bias}, and the occlusion window $w \in \{16, 32, 64, 128\}$, which sets the resolution analyzed in Section~\ref{sec: occ}. Each configuration is evaluated on 20 instances per disturbance class of each test split with all six summaries computed. Table~\ref{tb:sens} reports the strongest-performing summaries of Section~\ref{sec:percentile_results} ($\alpha=5$ for RMA, $\alpha=25$ for IoU) together with the mean summary.

\begin{table}[t]
	\scriptsize
	\centering
	\setlength{\tabcolsep}{4pt}
	\renewcommand{\arraystretch}{1.15}
	\caption{Sensitivity of the localization metrics to the number of posterior samples and the occlusion window (mean $\pm$ standard deviation over five test splits, 20 instances per class per split).}
	\label{tb:sens}
	\begin{tabular}{lcccc}
		\toprule
		Configuration & RMA ($\alpha{=}5$) & IoU ($\alpha{=}25$) & RMA (mean) & IoU (mean) \\
		\midrule
		$S=10$, $w=64$, $v=8$ & $0.6862\pm0.0027$ & $0.7171\pm0.0087$ & $0.6732\pm0.0045$ & $0.7094\pm0.0047$ \\
		$S=25$, $w=64$, $v=8$ & $0.6870\pm0.0038$ & $0.7154\pm0.0077$ & $0.6710\pm0.0053$ & $0.7079\pm0.0073$ \\
		$S=50$, $w=64$, $v=8$ & $0.6883\pm0.0044$ & $0.7169\pm0.0076$ & $0.6685\pm0.0038$ & $0.7027\pm0.0075$ \\
		$S=100$, $w=64$, $v=8$ & $0.6905\pm0.0053$ & $0.7163\pm0.0086$ & $0.6673\pm0.0041$ & $0.7032\pm0.0038$ \\
		$S=200$, $w=64$, $v=8$ & $0.6898\pm0.0052$ & $0.7200\pm0.0072$ & $0.6663\pm0.0042$ & $0.7020\pm0.0053$ \\
		\midrule
		$w=16$, $S=100$, $v=8$ & $0.6801\pm0.0105$ & $0.6323\pm0.0071$ & $0.6510\pm0.0068$ & $0.6244\pm0.0079$ \\
		$w=32$, $S=100$, $v=8$ & $0.6777\pm0.0095$ & $0.6681\pm0.0092$ & $0.6610\pm0.0073$ & $0.6535\pm0.0047$ \\
		$w=64$, $S=100$, $v=8$ & $0.6905\pm0.0053$ & $0.7163\pm0.0086$ & $0.6673\pm0.0041$ & $0.7032\pm0.0038$ \\
		$w=128$, $S=100$, $v=8$ & $0.6847\pm0.0061$ & $0.7123\pm0.0096$ & $0.6714\pm0.0054$ & $0.7045\pm0.0059$ \\
		\bottomrule
	\end{tabular}
\end{table}

The localization metrics are insensitive to $S$: the $\alpha=5$ RMA varies by less than $0.005$ between $S=10$ and $S=200$, consistent with the $O(S^{-1/2})$ Monte Carlo term of \eqref{eq:bias}. Larger $S$ is retained for the resolution of the outer percentiles rather than for accuracy. IoU falls steeply for sub-cycle windows ($0.632$ at $w=16$, $0.668$ at $w=32$) and plateaus from one cycle onward ($0.716$ at $w=64$, $0.712$ at $w=128$), supporting the one-cycle window. At every configuration evaluated, the best-RMA summary was $\alpha=5$ and the best-IoU summary $\alpha=25$ or $\alpha=50$.

\subsection{Explanation Behavior under Injected Measurement Noise}
\label{sec:noise_exp}
Adverse signal-to-noise conditions are the norm in field measurements. Additive white Gaussian noise at signal-to-noise ratios (SNRs) of 30, 20, and 10~dB relative to the signal power, and additionally superimposed background harmonics (5th order at 5\% and 7th order at 3\% of the fundamental amplitude, with random phase) not labeled as disturbances, are injected into the five test splits, and the percentile protocol is repeated on all $7{,}500$ instances per condition with the original ground-truth masks. This is a failure-mode analysis of the noise-free-trained classifier: it examines how localization and explanation dispersion respond as classification degrades.

\begin{table}[t]
	\scriptsize
	\centering
	\setlength{\tabcolsep}{4pt}
	\renewcommand{\arraystretch}{1.15}
	\caption{Robustness under injected noise and background harmonics ($7{,}500$ instances per condition). Band = mean width of the $5$--$95$ explanation band; Std = mean explanation standard deviation.}
	\label{tb:noise}
	\begin{tabular}{lccccc}
		\toprule
		Condition & Accuracy & RMA ($\alpha{=}5$) & IoU ($\alpha{=}25$) & Band & Std \\
		\midrule
		Clean & $0.983$ & $0.665$ & $0.708$ & $0.117$ & $0.046$ \\
		30 dB & $0.503$ & $0.603$ & $0.611$ & $0.138$ & $0.048$ \\
		30 dB + harm. & $0.460$ & $0.602$ & $0.602$ & $0.125$ & $0.043$ \\
		20 dB & $0.300$ & $0.602$ & $0.610$ & $0.134$ & $0.046$ \\
		20 dB + harm. & $0.279$ & $0.603$ & $0.607$ & $0.129$ & $0.044$ \\
		10 dB & $0.124$ & $0.600$ & $0.584$ & $0.155$ & $0.053$ \\
		10 dB + harm. & $0.128$ & $0.601$ & $0.586$ & $0.158$ & $0.055$ \\
		\bottomrule
	\end{tabular}
\end{table}

Table~\ref{tb:noise} reports the results as instance averages, whose clean values therefore differ slightly from the class-macro averages of Table~\ref{tb:bnn}. Because the classifier is trained on noise-free synthetic data, the injected noise is a severe distribution shift: accuracy drops sharply, background harmonics cost a further one to four accuracy points at moderate SNR, and robust field operation would require noise-augmented training of the classifier itself, which is outside the scope of this study. Localization declines more slowly than accuracy: at 10~dB, where accuracy falls to $0.124$, the $\alpha=5$ RMA is $0.600$ against $0.665$ on clean data. Conditioning on classification correctness separates two behaviors: correctly classified instances at 10~dB retain an $\alpha=5$ RMA of $0.91$, whereas misclassified instances fall to $0.55$; since attribution targets the labeled class, the latter measures where evidence for the true disturbance remains rather than explaining the erroneous decision. The dispersion statistics increase with degradation: the $5$--$95$ band width grows from $0.117$ on clean data to $0.155$--$0.158$ at 10~dB, largest in the harmonic-contaminated conditions, so wide explanation bands were associated with degraded operating conditions in this experiment.

\subsection{Field-Recording Case Study}
\label{sec:realworld}
\begin{figure*}[htbp]
	\captionsetup[subfigure]{labelformat=empty}
	\centering
	\begin{subfigure}[b]{0.4\textwidth}
		\centering
		\resizebox{\width}{1.6cm}{%
			\includegraphics[scale=1,width=\textwidth]{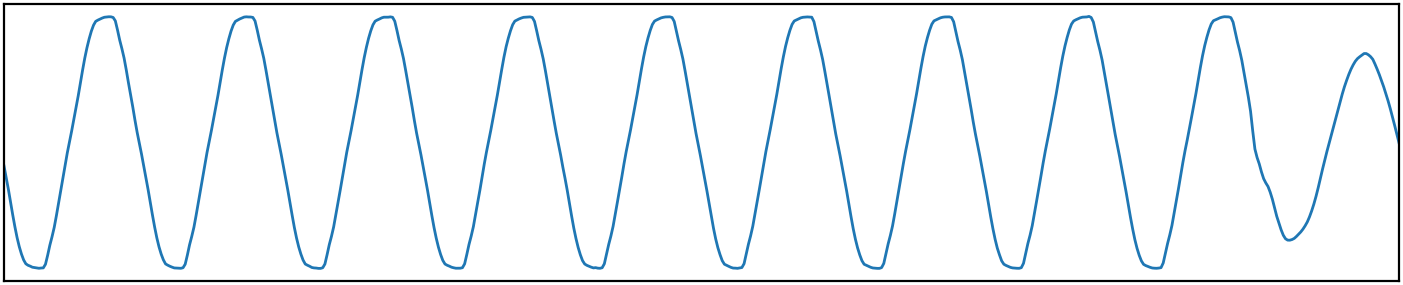}}
		\caption{Original Signal 1}
	\end{subfigure}
	\begin{subfigure}[b]{0.4\textwidth}
		\centering
		\resizebox{\width}{1.6cm}{%
			\includegraphics[scale=1,width=\textwidth]{Realword_Origin.png}}
		\caption{Original Signal 1}
	\end{subfigure}

	\begin{subfigure}[b]{0.4\textwidth}
		\centering
		\resizebox{\width}{0.9cm}{%
			\includegraphics[scale=1,width=\textwidth]{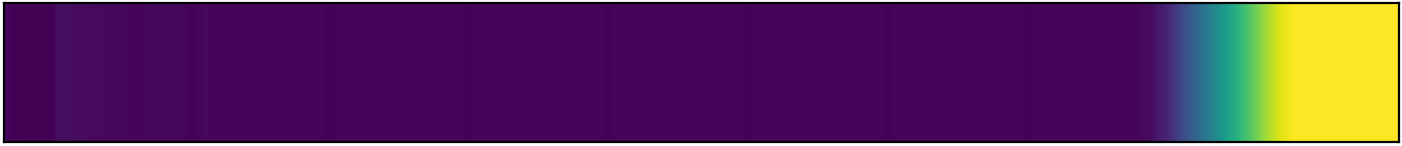}
		}
		\caption{MAP Explanation}
	\end{subfigure}
	\begin{subfigure}[b]{0.4\textwidth}
		\centering
		\resizebox{\width}{0.9cm}{%
			\includegraphics[scale=1,width=\textwidth]{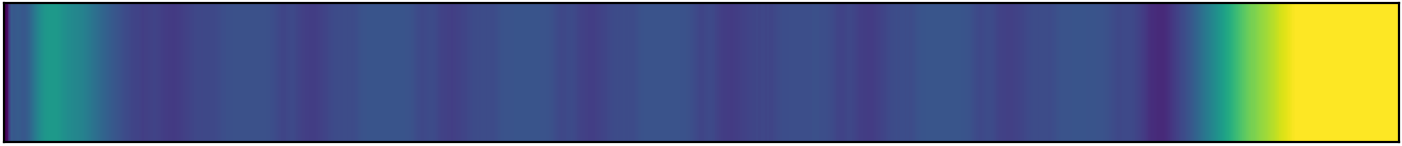}
		}
		\caption{MAP Explanation}
	\end{subfigure}

	\begin{subfigure}[b]{0.4\textwidth}
		\centering
		\resizebox{\width}{0.9cm}{%
			\includegraphics[scale=1,width=\textwidth]{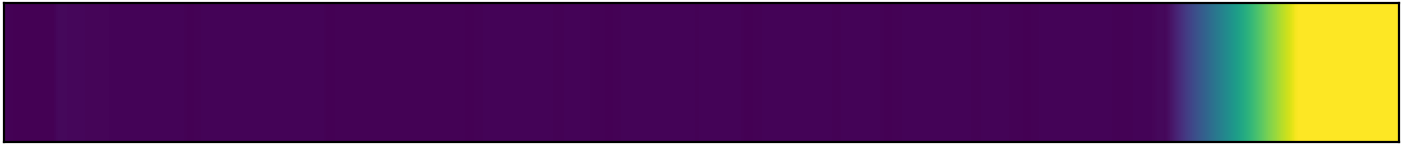}
		}
		\caption{B-Explanation $\alpha=5$}
	\end{subfigure}
	\begin{subfigure}[b]{0.4\textwidth}
		\centering
		\resizebox{\width}{0.9cm}{%
			\includegraphics[scale=1,width=\textwidth]{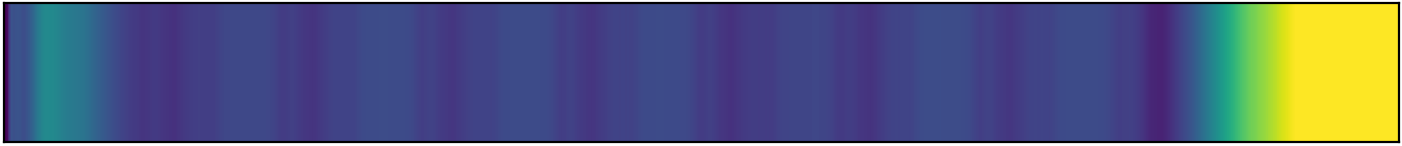}
		}
		\caption{B-Explanation $\alpha=5$}
	\end{subfigure}

	\begin{subfigure}[b]{0.4\textwidth}
		\centering
		\resizebox{\width}{0.9cm}{%
			\includegraphics[scale=1,width=\textwidth]{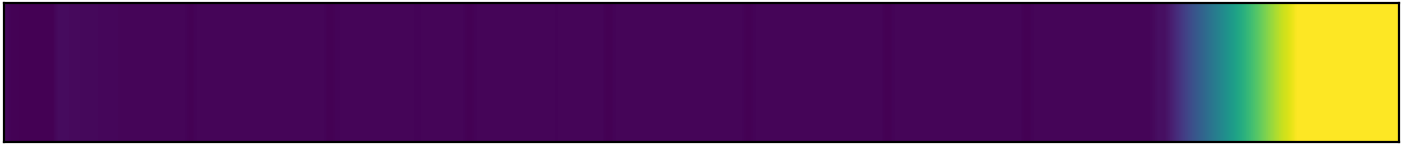}
		}
		\caption{B-Explanation $\alpha=25$}
	\end{subfigure}
	\begin{subfigure}[b]{0.4\textwidth}
		\centering
		\resizebox{\width}{0.9cm}{%
			\includegraphics[scale=1,width=\textwidth]{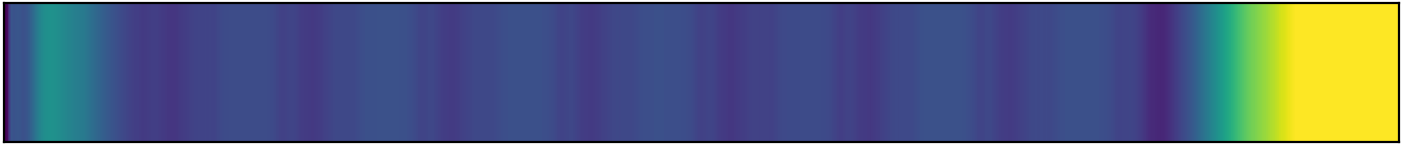}
		}
		\caption{B-Explanation $\alpha=25$}
	\end{subfigure}

	\begin{subfigure}[b]{0.4\textwidth}
		\centering
		\resizebox{\width}{0.9cm}{%
			\includegraphics[scale=1,width=\textwidth]{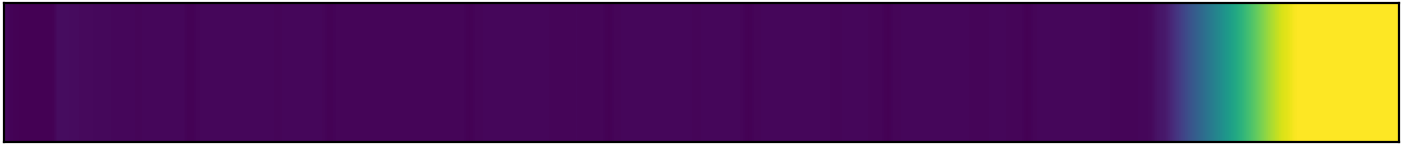}
		}
		\caption{B-Explanation $\alpha=50$}
	\end{subfigure}
	\begin{subfigure}[b]{0.4\textwidth}
		\centering
		\resizebox{\width}{0.9cm}{%
			\includegraphics[scale=1,width=\textwidth]{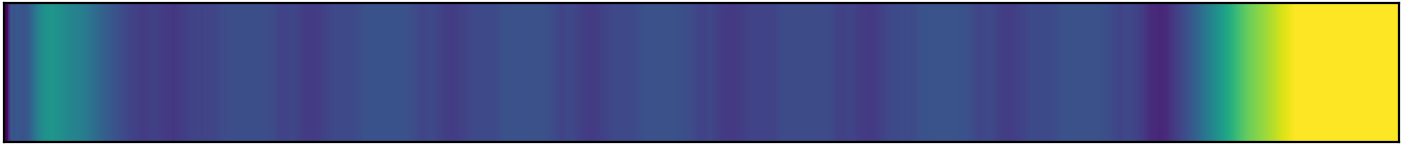}
		}
		\caption{B-Explanation $\alpha=50$}
	\end{subfigure}

	\begin{subfigure}[b]{0.4\textwidth}
		\centering
		\resizebox{\width}{0.9cm}{%
			\includegraphics[scale=1,width=\textwidth]{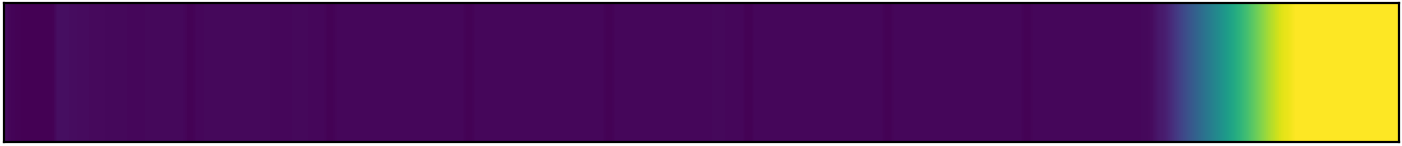}
		}
		\caption{B-Explanation $\alpha=75$}
	\end{subfigure}
	\begin{subfigure}[b]{0.4\textwidth}
		\centering
		\resizebox{\width}{0.9cm}{%
			\includegraphics[scale=1,width=\textwidth]{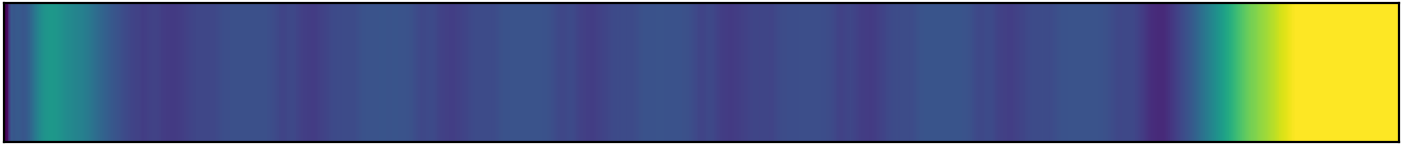}
		}
		\caption{B-Explanation $\alpha=75$}
	\end{subfigure}

	\begin{subfigure}[b]{0.4\textwidth}
		\centering
		\resizebox{\width}{0.9cm}{%
			\includegraphics[scale=1,width=\textwidth]{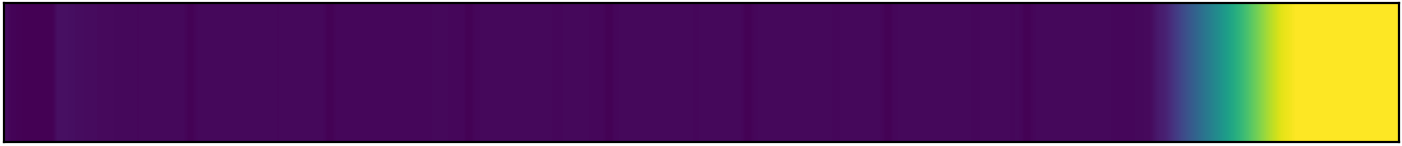}
		}
		\caption{\centering{B-Explanation $\alpha=95$ \\ (a) Model 1}}
	\end{subfigure}
	\begin{subfigure}[b]{0.4\textwidth}
		\centering
		\resizebox{\width}{0.9cm}{%
			\includegraphics[scale=1,width=\textwidth]{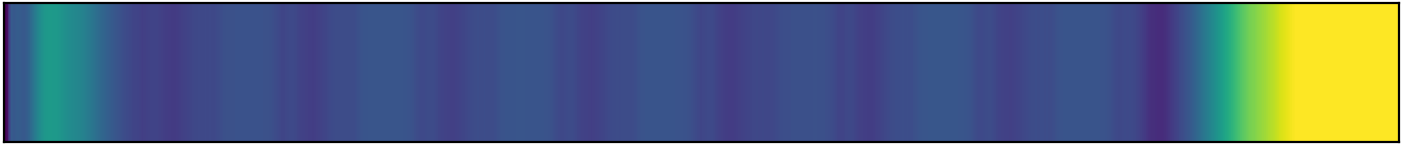}
		}
		\caption{\centering{B-Explanation $\alpha=95$ \\ (b) Model 2}}
	\end{subfigure}

	\begin{center}
		\begin{subfigure}[b]{0.4\textwidth}
			\centering
			\resizebox{\width}{0.45cm}{%
				\includegraphics[scale=1,width=\textwidth]{colormap.png}
			}
		\end{subfigure}
	\end{center}
	\caption{MAP and Bayesian explanations for real-world Sag signal prediction (signal 1). Both MAP models, trained on the synthetic dataset with different random seeds, classify the signal correctly as ``Sag''.}
	\label{fig:realworld}
\end{figure*}

\begin{figure*}[htbp]
	\captionsetup[subfigure]{labelformat=empty}
	\centering
	\begin{subfigure}[b]{0.4\textwidth}
		\centering
		\resizebox{\width}{1.6cm}{%
			\includegraphics[scale=1,width=\textwidth]{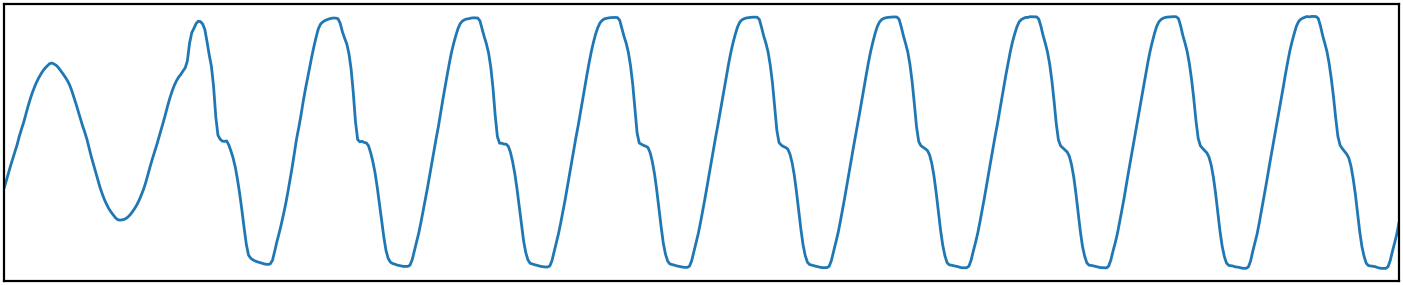}}
		\caption{Original Signal 2}
	\end{subfigure}
	\begin{subfigure}[b]{0.4\textwidth}
		\centering
		\resizebox{\width}{1.6cm}{%
			\includegraphics[scale=1,width=\textwidth]{Realword3_Origin.png}}
		\caption{Original Signal 2}
	\end{subfigure}

	\begin{subfigure}[b]{0.4\textwidth}
		\centering
		\resizebox{\width}{0.9cm}{%
			\includegraphics[scale=1,width=\textwidth]{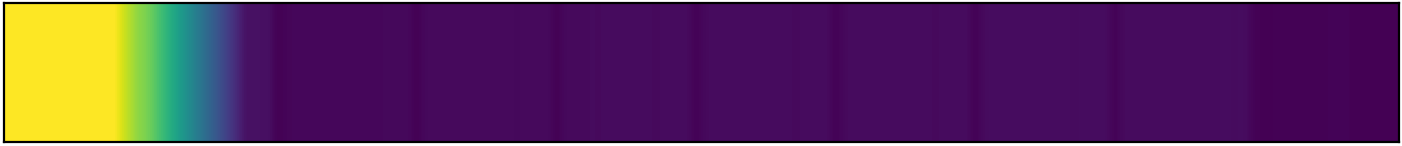}}
		\caption{MAP Explanation}
	\end{subfigure}
	\begin{subfigure}[b]{0.4\textwidth}
		\centering
		\resizebox{\width}{0.9cm}{%
			\includegraphics[scale=1,width=\textwidth]{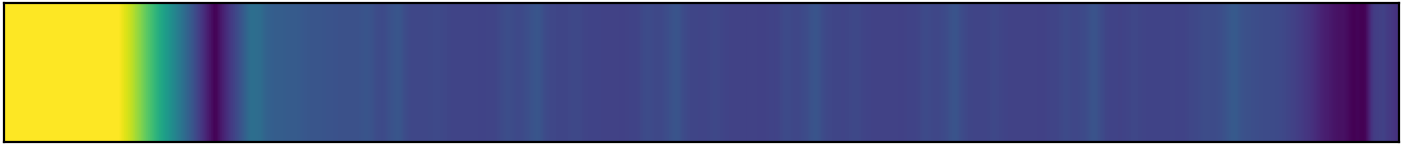}}
		\caption{MAP Explanation}
	\end{subfigure}

	\begin{subfigure}[b]{0.4\textwidth}
		\centering
		\resizebox{\width}{0.9cm}{%
			\includegraphics[scale=1,width=\textwidth]{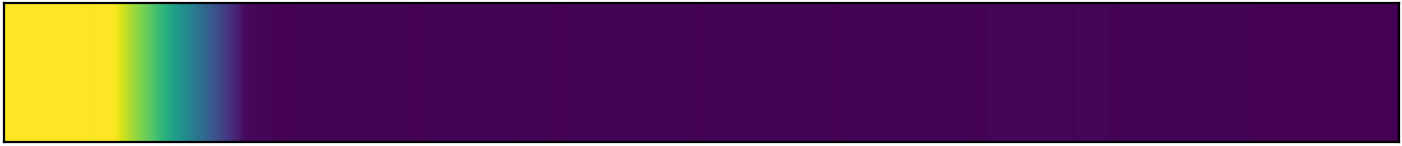}}
		\caption{B-Explanation $\alpha=5$}
	\end{subfigure}
	\begin{subfigure}[b]{0.4\textwidth}
		\centering
		\resizebox{\width}{0.9cm}{%
			\includegraphics[scale=1,width=\textwidth]{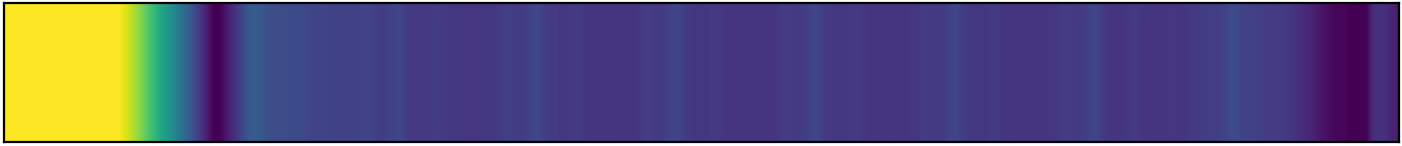}}
		\caption{B-Explanation $\alpha=5$}
	\end{subfigure}

	\begin{subfigure}[b]{0.4\textwidth}
		\centering
		\resizebox{\width}{0.9cm}{%
			\includegraphics[scale=1,width=\textwidth]{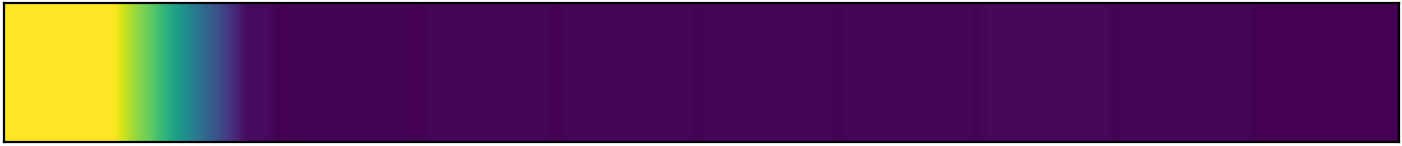}}
		\caption{B-Explanation $\alpha=25$}
	\end{subfigure}
	\begin{subfigure}[b]{0.4\textwidth}
		\centering
		\resizebox{\width}{0.9cm}{%
			\includegraphics[scale=1,width=\textwidth]{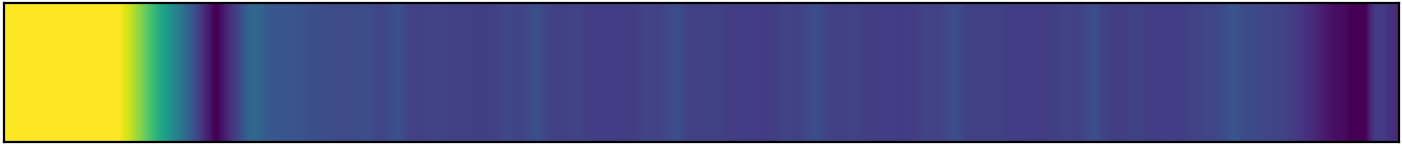}}
		\caption{B-Explanation $\alpha=25$}
	\end{subfigure}

	\begin{subfigure}[b]{0.4\textwidth}
		\centering
		\resizebox{\width}{0.9cm}{%
			\includegraphics[scale=1,width=\textwidth]{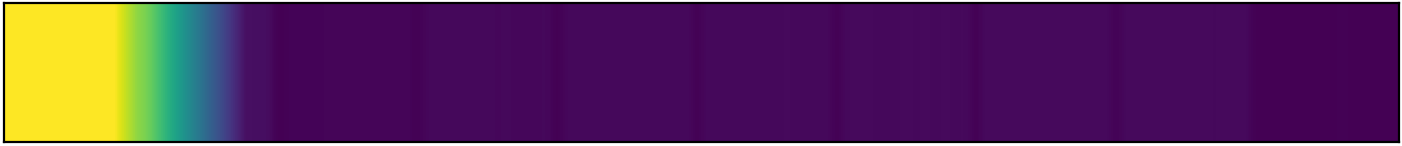}}
		\caption{B-Explanation $\alpha=50$}
	\end{subfigure}
	\begin{subfigure}[b]{0.4\textwidth}
		\centering
		\resizebox{\width}{0.9cm}{%
			\includegraphics[scale=1,width=\textwidth]{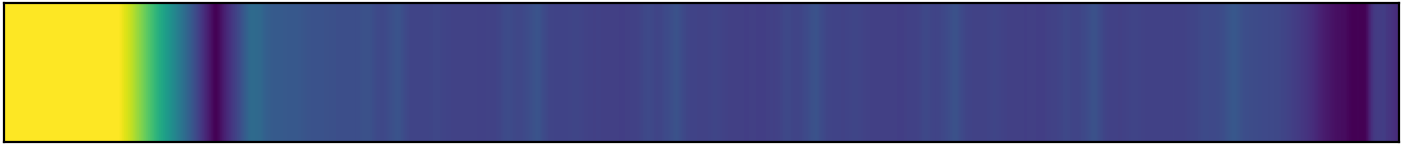}}
		\caption{B-Explanation $\alpha=50$}
	\end{subfigure}

	\begin{subfigure}[b]{0.4\textwidth}
		\centering
		\resizebox{\width}{0.9cm}{%
			\includegraphics[scale=1,width=\textwidth]{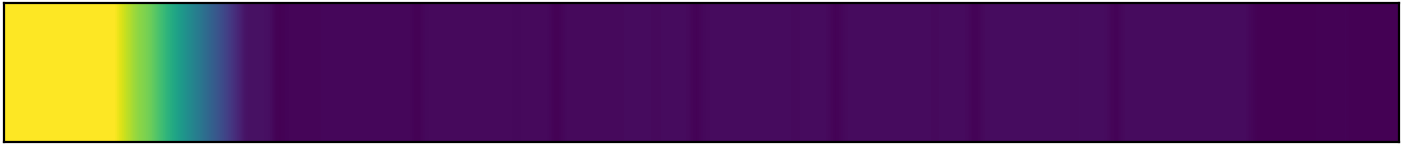}}
		\caption{B-Explanation $\alpha=75$}
	\end{subfigure}
	\begin{subfigure}[b]{0.4\textwidth}
		\centering
		\resizebox{\width}{0.9cm}{%
			\includegraphics[scale=1,width=\textwidth]{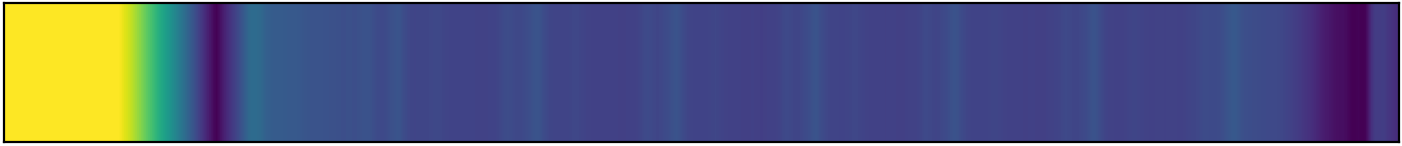}}
		\caption{B-Explanation $\alpha=75$}
	\end{subfigure}

	\begin{subfigure}[b]{0.4\textwidth}
		\centering
		\resizebox{\width}{0.9cm}{%
			\includegraphics[scale=1,width=\textwidth]{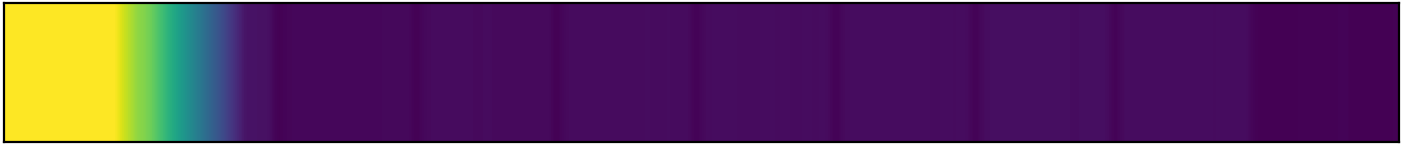}}
		\caption{\centering{B-Explanation $\alpha=95$ \\ (a) Model 1}}
	\end{subfigure}
	\begin{subfigure}[b]{0.4\textwidth}
		\centering
		\resizebox{\width}{0.9cm}{%
			\includegraphics[scale=1,width=\textwidth]{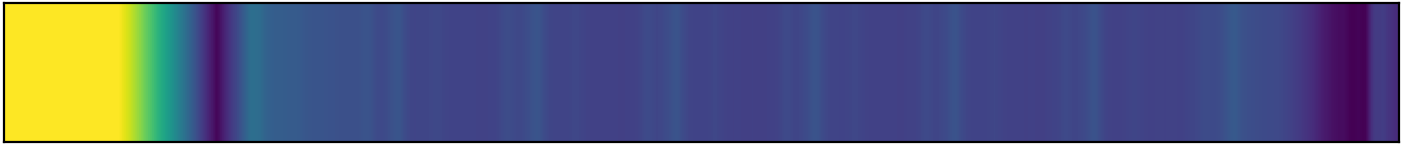}}
		\caption{\centering{B-Explanation $\alpha=95$ \\ (b) Model 2}}
	\end{subfigure}

	\begin{center}
		\begin{subfigure}[b]{0.4\textwidth}
			\centering
			\resizebox{\width}{0.45cm}{%
				\includegraphics[scale=1,width=\textwidth]{colormap.png}
			}
		\end{subfigure}
	\end{center}
	\caption{MAP and Bayesian explanations for real-world Sag signal prediction (signal 2). Both MAP models, trained on the synthetic dataset with different random seeds, classify the signal correctly as ``Sag''.}
	\label{fig:realworld3}
\end{figure*}

To illustrate behavior under synthetic-to-field transfer, the method is applied to real recordings from the University of Cadiz power network \cite{h2k88d-17} (50~Hz, sampled at 20~kHz, five years of measurements with diverse sag events). Signals are anti-alias filtered, downsampled to the training rate, windowed to 10 cycles containing sag events, and normalized, then passed to two DCNNs trained on the synthetic dataset with different random seeds. Representative results are shown in Figs.~\ref{fig:realworld} and \ref{fig:realworld3}.

Both models classify the real sags correctly despite purely synthetic training, with sag probabilities of $0.996$ and $0.954$ for Signal 1 and $0.996$ and $0.638$ for Signal 2. In all four model--signal combinations the relevance concentrates on the sag interval itself: the terminal sag of Signal 1 and the initial sag of Signal 2 receive the dominant relevance mass, while the surrounding nominal cycles receive comparatively little, with Model 2 assigning visibly more residual relevance to nominal cycles than Model 1. The field recordings carry secondary artifacts that make them resemble composite disturbances, most visibly the commutation notches on the nominal cycles of Signal 2; these receive far less relevance than the sag interval, indicating that the sag rather than the secondary distortion dominates the attribution. The posterior consensus is strong, with mean $5$--$95$ band widths between $0.017$ and $0.053$ and percentile maps that change little from $\alpha=5$ to $\alpha=95$. Consistent with the reliability interpretation, the widest band ($0.053$) belongs to the model--signal combination with the lowest predictive confidence (Model 2 on Signal 2), so the explanation band mirrors the reliability ordering of the predictions even in this small field sample; the aggregate dispersion statistics of Section~\ref{sec:ood} extend this observation to all field windows.

\subsection{Out-of-Distribution Case Study}
\label{sec:ood}
The coupling of predictive and explanation samples (Section~\ref{sec:expl_dist}) suggests using explanation dispersion as an out-of-distribution (OOD) indicator. Two OOD regimes are evaluated against a matched in-distribution reference: (i) inputs with an off-nominal fundamental frequency (47 and 52~Hz), representing operating conditions outside the training distribution; and (ii) 104 ten-cycle windows obtained by segmenting the field recordings of Section~\ref{sec:realworld} into consecutive windows without manual selection (up to four per recording), representing the sensor and network conditions of field data. For each input, predictive entropy, mutual information across posterior samples, and two explanation-dispersion statistics (mean $5$--$95$ band width and mean explanation standard deviation) are recorded, and each statistic is scored by the area under the receiver operating characteristic curve (AUROC) in separating the OOD group from the in-distribution reference.

\begin{table}[t]
	\scriptsize
	\centering
	\setlength{\tabcolsep}{4pt}
	\renewcommand{\arraystretch}{1.15}
	\caption{Out-of-distribution separation (AUROC against the in-distribution reference of 150 instances) for predictive-uncertainty and explanation-dispersion statistics. \textbf{Bold} marks the best-separating statistic for each group.}
	\label{tb:ood}
	\begin{tabular}{lccccc}
		\toprule
		OOD group & $n$ & Pred.\ entropy & Mutual info. & Band width & Expl.\ std \\
		\midrule
		47 Hz & 150 & $\mathbf{0.730}$ & $0.649$ & $0.536$ & $0.524$ \\
		52 Hz & 150 & $\mathbf{0.795}$ & $0.695$ & $0.483$ & $0.459$ \\
		Real-world & 104 & $0.614$ & $0.550$ & $\mathbf{0.734}$ & $0.708$ \\
		\bottomrule
	\end{tabular}
\end{table}

Table~\ref{tb:ood} shows that the two uncertainty channels detect different kinds of distribution shift. Off-nominal frequency, a parametric deviation that directly perturbs the classifier's decision, is flagged by predictive entropy (AUROC $0.73$--$0.80$) while leaving explanation dispersion near chance. Real-world measurement windows show the opposite pattern: predictive entropy separates them only weakly ($0.61$), because the classifier remains confident on field data, yet explanation dispersion separates them best of all statistics ($0.73$), with the mean band width rising from $0.111$ in distribution to $0.200$ on field data. Explanation dispersion is therefore not a redundant proxy for predictive uncertainty: in this experiment it responded to shifts in the evidential structure of the input, of the kind introduced by real sensor and network conditions, that left the prediction itself confidently unchanged. The two statistics were complementary across the two shift types examined; establishing them as operational detectors would require threshold calibration, uncertainty estimates for the AUROC values, and larger field datasets.

\subsection{Computational Cost}
\label{sec:timing}
The analytic complexity of Section~\ref{sec:complexity} is validated by wall-clock measurement of single-signal explanation generation on an NVIDIA A100-SXM4-40GB GPU and, representative of an edge monitoring device, on an Intel Xeon Gold 6240R CPU, with flush-to-zero arithmetic enabled as is standard in inference runtimes.

\begin{table}[t]
	\scriptsize
	\centering
	\setlength{\tabcolsep}{4pt}
	\renewcommand{\arraystretch}{1.15}
	\caption{Wall-clock cost of explaining one waveform ($N=640$, $w=64$, $v=8$).}
	\label{tb:timing}
	\begin{tabular}{lcccc}
		\toprule
		Device & Forward pass & MAP explanation & B-expl.\ $S{=}10$ & B-expl.\ $S{=}100$ \\
		\midrule
		CPU & $1.3$ ms & $126$ ms & $1.29$ s & $12.8$ s \\
		GPU & $0.9$ ms & $93$ ms & $0.97$ s & $9.7$ s \\
		\bottomrule
	\end{tabular}
\end{table}

Table~\ref{tb:timing} confirms the analysis: the B-explanation cost is linear in $S$, with measured ratios of $101$--$104$ between the $S=100$ B-explanation and one MAP explanation, matching the online cost factor of exactly $S$ derived in Section~\ref{sec:complexity}. The GPU advantage is modest at batch size one because sequential window evaluations underutilize the device; in batched operation the occluded signals of one explanation can be evaluated jointly, which reduces the GPU cost substantially. These figures characterize computational cost rather than demonstrate edge deployability: at $12.8$~s per full $S=100$ B-explanation on CPU, event-triggered use appears feasible on modest hardware, and $S$ can be reduced toward $10$ ($1.3$~s) where faster turnaround is needed, but validation on actual edge devices remains future work. The offline Laplace fit took one pass over the training data (under two minutes on GPU) and is incurred once per model. Since the Fisher accumulation requires the training data, a model file alone is not sufficient to construct the posterior.

\section{Discussion}
\label{sec:discussion}

\subsection{Physical Interpretation of Explanation Uncertainty}
\label{sec:physical}
Explanation uncertainty in the proposed method has a concrete physical interpretation. The posterior over parameters encodes which decision functions are consistent with the training data, and the induced dispersion at time $n$ measures the extent to which those functions disagree about the evidential role of sample $n$. Several characteristic situations arise in PQD monitoring. The interaction between the disturbance physics and the occlusion baseline determines how much evidence the explanation can carry: for an interruption, the near-zero disturbance segment is barely altered by zero-baseline occlusion, so in-region relevance is intrinsically weak and posterior members scatter their residual relevance, which is why the B-explanation contributes reliability information rather than sharper localization for this class (Table~\ref{tb:guidance}). For composite disturbances, explanation modes correspond to the constituent events (Fig.~\ref{fig:example}), so the distribution indicates which component of, e.g., a Flicker+Sag signal drives an individual classification, information that a single map does not provide. Consistently, the largest explanation dispersion on clean data is observed for flicker and its composites, whose globally distributed envelope modulation leaves genuine ambiguity about which cycles carry the evidence. Boundary regions of amplitude events (onset and recovery of sags and swells) show elevated variance because window occlusions there mix disturbed and nominal content. High dispersion at a boundary should therefore be interpreted with the window geometry in mind, whereas high dispersion inside an event region indicates low explanation reliability. In each case, elevated explanation uncertainty identifies conditions under which a single deterministic explanation would be unreliable.

\subsection{Deployment Considerations}
\label{sec:deployment}
The method is compatible with existing monitoring architectures. The Laplace fit is a one-off offline step performed at training time. The deployed artifacts are the MAP checkpoint and one diagonal covariance vector, which doubles model storage (a few megabytes for the DCNN used here). At the edge, e.g., in a power quality analyzer or smart-meter gateway, B-explanation generation consists of $S(T{+}1)$ batched forward passes of a compact 1D-CNN, whose cost Section~\ref{sec:timing} characterizes; explanations are required only for the small fraction of windows flagged as disturbances rather than for the continuous stream. A natural outlook, not evaluated here, is hierarchical integration: edge devices compute B-explanations on trigger and forward the percentile maps and band widths together with the event record. Because the method is post hoc, it can be retrofitted to deployed classifiers without modifying the classification function. For multi-channel installations with three-phase voltage and current, the operator extends directly by occluding per-channel segments, with cost multiplied by the channel count. A joint treatment of cross-channel explanation correlation is left to future work (Section~\ref{sec:4}).

\section{Conclusion and Future Work}
\label{sec:4}
This paper developed a post-hoc method that equips deployed PQD classifiers with uncertainty-aware explanations. Its main findings are: (i) a Laplace approximation fitted to an existing model, pushed through occlusion sensitivity, yields an explanation distribution at no retraining cost, with estimation error controlled by an explicit bias-plus-Monte-Carlo bound; (ii) percentile B-explanations showed class-dependent behavior, with consensus summaries improving localization significantly for distinctive events and the percentile band supplying reliability information for the remaining classes, so that localization was matched or significantly improved relative to the deterministic baseline while the test-time MC-dropout and deep-ensemble alternatives fell significantly below it under the same protocol; and (iii) the dispersion of the explanation distribution increased under injected noise, synthetic-to-field transfer, and the shifted inputs examined, providing descriptive reliability information.

Future directions include: (i) incremental B-explanation over sliding windows, which reuses overlapping occlusion evaluations to reduce the per-window cost from $S(T{+}1)$ to $S\,w/v$ forward passes, enabling continuous edge deployment; (ii) joint three-phase voltage--current occlusion for cross-channel attribution; and (iii) incorporation of low-rank, last-layer Laplace, or linearized predictive approximations to improve curvature estimation at comparable post-hoc cost.

\section*{Acknowledgment}
This work was supported by the Australian Research Council (ARC, IC210100021).

\bibliographystyle{elsarticle-num}
\bibliography{refer.bib}
\end{document}